\theoremstyle{plain}
\newtheorem{theorem}{Theorem}[section]
\newtheorem{proposition}[theorem]{Proposition}
\newtheorem{lemma}[theorem]{Lemma}
\newtheorem{corollary}[theorem]{Corollary}
\theoremstyle{definition}
\theoremstyle{remark}
\newtheorem{remark}[theorem]{Remark}
\icmltitlerunning{Deformation Stability of Equivariant Convolutional Representations through Multi-layered Kernel Representations}
\begin{document}

\twocolumn[
\icmltitle{Stability Analysis of Equivariant Convolutional Representations Through The Lens of Equivariant Multi-layered CKNs}



\icmlsetsymbol{equal}{*}

\begin{icmlauthorlist}
\icmlauthor{Soutrik Roy Chowdhury}{yyy}
\end{icmlauthorlist}

\icmlaffiliation{yyy}{Arenberg Doctoral School, KU Leuven, 3001 Leuven, Belgium}

\icmlcorrespondingauthor{Soutrik Roy Chowdhury}{roychowdhurysoutrik@gmail.com}

\begin{center}
\textbf{Editors:} \textit{S. Vadgama, E. J. Bekkers, A. Pouplin, O. Kaba, H. Lawrence, R. Walters, T. Emerson, H. Kvinge, J. M. Tomczak, S. Jegelka}
\end{center}

\icmlkeywords{Machine Learning, ICML}

\vskip 0.3in
]



\printAffiliationsAndNotice{}  

\begin{abstract}
In this paper we construct and theoretically analyse group equivariant convolutional kernel networks (CKNs) which are useful in understanding the geometry of (equivariant) CNNs through the lens of reproducing kernel Hilbert spaces (RKHSs). We then proceed to study the stability analysis of such equiv-CKNs under the action of diffeomorphism and draw a connection with equiv-CNNs, where the goal is to analyse the geometry of inductive biases of equiv-CNNs through the lens of reproducing kernel Hilbert spaces (RKHSs). Traditional deep learning architectures, including CNNs, trained with sophisticated optimization algorithms is vulnerable to additive perturbations, including `adversarial examples'. Understanding the RKHS norm of such models through CKNs is useful in designing the appropriate architecture and can be useful in designing robust equivariant representation learning models.
\end{abstract}

\section{Introduction}

In the past decade deep neural networks, especially convolutional neural networks (CNNs) \cite{lecuncnn89} have achieved impressive results for various predictive tasks, notably in the domains of computer vision \cite{krizhevsky2017imagenet} and natural language processing. Much success of CNNs in these domains relies on (1) the availability of large scaled labeled and structured data which allow the model to learn huge number of parameters without worrying too much of overfitting, and (2) the ability to model local information of signals (e.g., images) at multiple scales, while also representing the signals with some invariance through pooling operations. The latter property of CNNs have distinguished them from fully-connected networks \cite{li2021why} in terms of sample efficiency, generalization ability and computational speed, much through its elegant model design. Still, understanding the exact mathematical nature of this invariance as well as the characteristics of the functional spaces where CNNs live are indeed open problems for which multiple constructions and analyses have been provided in past years.

One such construction is of group equivariant CNNs \cite{cohen2016group} where the translation equivariance of convolutional layers has been generalized to other kinds of symmetries, for e.g., rotations, reflections, etc., thus making CNNs equivariant to more general transformations, where such transformations and corresponding equivariant maps for learning layerwise features are encoded by the representation theory of finite symmetric groups, an important tool used by mathematicians and physicists for centuries. Despite different elegant constructions of group equivariant CNNs, for e.g. \cite{cohen2016steerable, weiler20183d,weiler2019general} there exists only a few works, e.g., \cite{cohen2019general,kondor2018generalization} focusing on the theoretical analysis of such networks, which might be beneficial to understand the geometry of these inductive biases in the model that plays pivotal role in the enhanced expressive power of the equivariant convolutional networks.

Another construction is of Convolutional Kernel Networks (CKNs) \cite{mairal2014convolutional,mairal2016end} where local signal neighbourhoods are mapped to points in a reproducing kernel hilbert space (RKHS) through the kernel trick and then hierarchical representations are built by composing kernels with corresponding RKHSs (patch extraction + kernel mapping + pooling operations in each layer) which is equivalent to construction of a sequence of feature maps in conventional CNNs, but of infinite dimension. A wider functional space approach \cite{bietti2019group} of CKNs has been proposed for multi-dimensional signals which also admits multilayered and convolutional kernel structure. This functional space also contains a large class of CNNs with homogeneous activation functions, thus showing such CNNs can also enjoy same theoretical properties that of CKNs, therefore highlighting on the geometry of the functional spaces in which CNNs lie. Furthermore, an analysis of approximation and generalization capabilities of deep convolutional networks through the lens of CKNs has been performed in \cite{bietti2022approximation}. Despite such mathematical analysis, exploring the equivariance properties of CKNs as well as generalization capabilities and robustness of equiv-CKNs have not been performed in details.

In this paper we first study how to make convolutional kernel layers equivariant to actions by a locally compact group $G$. Following the notations of diffeomorphism stability \cite{mallat2012group} we analyse the stability bounds of equiv-CKNs which depends upon the equivariant architecture of CKNs and corresponding RKHSs norms, thus providing a notion of robustness of equiv-CKNs. We then give an intuition on the (geometric) complexity of equivariant CNNs (equiv-CNNs) by giving a rough outline on how to construct equiv-CNNs in RKHSs, that might be helpful in studying stability and generalization properties of equiv-CNNs by bounding their corresponding RKHS norm.

\textbf{Contributions.}
\begin{itemize}
    \item We construct group equivariant multi-layered CKNs in details and provide a general analysis of how to make a CKN equivariant to any compact group action through \cref{thm: equiv-ckns}, followed by examples of such equiv-CKNs.
    \item Following the definition of deformation stability from \cite{mallat2012group}, we provide a Lipschitz stability styled bound of equivariant convolutional kernel representations in \cref{lie-transformation} thus showing how much robust equiv-CKNs are to the action of local diffeomorphism.
    \item We outlined how to extend the construction of group equiv-CKNs to group equiv-CNNs which is useful to extend the studies performed on equiv-CKNs (e.g., robustness, generalization bounds) to equiv-CNNs.
\end{itemize}

\subsection{Related works} 

The main source of motivation of this work on equivariant CKNs and corresponding stability bound is \cite{bietti2019group}, where authors generalized the construction of CKNs \cite{mairal2016end} and provide stability analysis and (equi)-invariance properties of CKNs. Though the authors provided a group invariant construction of CKNs, a detailed construction analysis with examples as well as stability properties of such generalized equiv-CKNs are still missing which is done in this work. Authors as well as us took the approach of \cite{mallat2012group} to study stability of deep convolutional kernel representations with respect to diffeomorphic actions. The motivation of the analysis is based upon the results from classical harmonic analysis. The approach of \cite{mallat2012group} uses pre-defined filters whereas ours is an end to end equivariant filters learning approach. 

The idea of learning equivariant functions with kernels was first conceived in \cite{JMLR:v8:reisert07a}, where the authors learned equivariant filters with matrix valued kernels. Recently \cite{lang2021a} classified the group steerable kernels for group CNNs through Wigner-Eckart theorems. The approach in this paper is different as our construction relies on properties of RKHSs and traditional kernel methods \cite{Smolakernel}. 

We note that deformation robustness of roto-translation equiv-CNNs has been studied in \cite{gao2022deformation}. The approach is different from ours as it relies on the idea of decomposed convolutional filters \cite{qiu2018dcfnet}. Moreover we studied deformation stability of any group equivariant CKNs, going beyond the domain of $\mathbb{R}^2 \rtimes SO(2)$, as done in that paper. Furthermore, a recent work \cite{schuchardt2023provable} have studied the effect of learning with adversarial examples on equivariant neural networks. We understand that our approach is different from the one proposed. Nevertheless we will take these approaches into account while studying our generalized equivariant convolutional kernel networks, going beyond Euclidean domain to manifolds and graphs.  

\section{Group Equivariant Convolutional Kernel Networks}
\label{grp equiv-CKNs}

The construction of a multilayered CKN involves transforming an input signal $x_0 \in L^2(\mathbb{R}^d, \mathcal{H}_0)$ (for e.g., $\mathcal{H}_0 = \mathbb{R}^{p_0}$, where for a 2D RGB image $p_0=3$ and $d=1$ and $x_0(u)$ in $\mathbb{R}^2$ represents the RGB pixel value at location $u \in \mathbb{R}^2$) into a sequence of feature maps, $x_k$'s in $L^2(\mathbb{R}^d, \mathcal{H}_k)$, by building a sequence of RKHSs $\mathcal{H}_k$'s, for each $k$, where a new feature map $x_k$ is built from the previous one $x_{k-1}$ by consecutive application of patch extraction $P_k$, kernel mapping $M_k$ and linear pooling $A_k$ operators, as shown in \cref{ckn-schematic}. For a detailed construction of multilayered CKNs on continuous and discrete signal\footnote{Note that though here in our construction signals are considered continuous for a better theoretical analysis, however for practical purposes one needs to discretize the feature maps.} domains we refer readers to \cite{bietti2019group,mairal2016end}.
\begin{figure}[ht]
\vskip 0.2in
\begin{center}
\centerline{\includegraphics[width=\columnwidth]{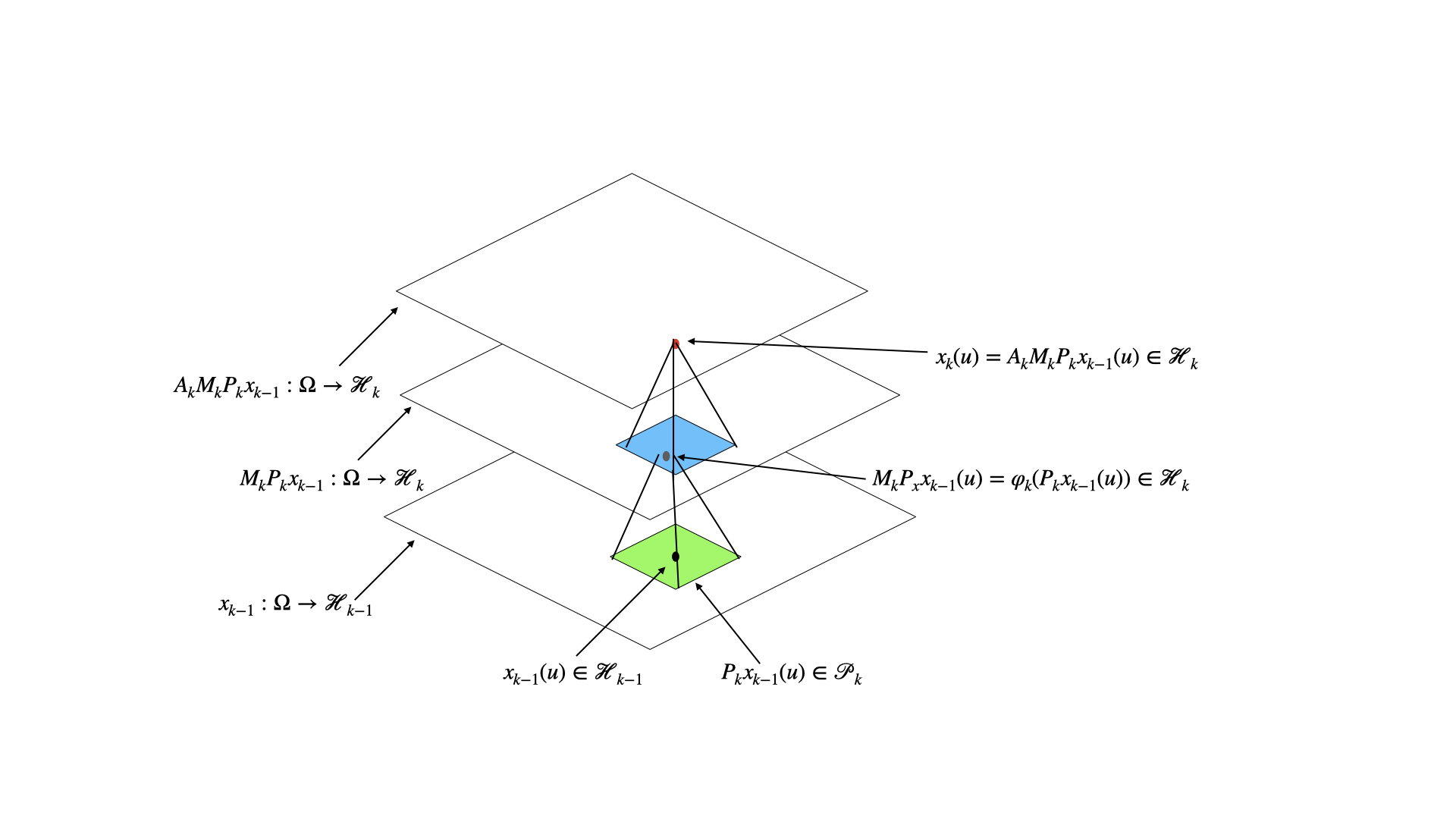}}
\caption{A schematic diagram of $1$-layer of a CKN where one constructs $k$-th signal representation from the $k$$-1$-th one in a RKHS $\mathcal{H}_k$ through patch extraction, kernel mapping and pooling operators, as similarly shown in \cite{bietti2019group}. Signal domain $\Omega = \mathbb{R}^d$ (in this figure $d=2$) on which locally compact group $G$ acts. One can construct a multilayered CKN by stacking these layers in a hierarchical manner and make the entire network equivariant by making each layers equivariant to the action of $G$.}
\label{ckn-schematic}
\end{center}
\vskip -0.2in
\end{figure}

In (section 3.1, \cite{bietti2019group}) it is shown that CKNs are equivariant to the translations as the layers commute with the action of translations, much like its classical CNNs counterpart. Following the general notations of group equivariance in CNNs \cite{kondor2018generalization} through the notion of locally compact group actions, it is possible to encode other kind of equivariance to group transformations (e.g., rotations, reflections) in CKN layers by constructing equivariant $P_k$'s, $M_k$'s and $A_k$'s for each $k$ that commutes with the action of a group of transformation $G$. We assume $G$ is locally compact so that we can define a Haar measure $\mu$ on it.\footnote{$\mu$ satisfies $\mu(gS) = \mu(S)$ for any Borel set $S \subseteq G$ and $g \in G$. Considering a Haar measure on $G$, which always exists for locally compact groups, the integration at pooling layers become invariant to group actions, as discussed briefly in appendix A.3 in \cite{cohen2019general}.} The action of an element $g \in G$ is denoted by operator $L_g$ where $L_g x(u) = x(g^{-1}u)$. We also assume that every element $x(u) \in \mathbb{R}^d$ can be reached with a transformation $u_\omega \in G$ from a neutral element, say $\hat{x}_0(u) \in \mathbb{R}^d$. One can then extend the original signal $\hat{x}$ by defining $x(u) = \hat{x}(u_\omega \cdot \hat{x}_0(u))$, as similarly shown in \cite{kondor2018generalization,bietti2019group}. Then one has
\begin{equation}
\label{transformation}
\begin{split}
    L_gx(u_\omega) & = x(g^{-1}u_\omega) \\
   & = \hat{x}((g^{-1}u_\omega) \cdot \hat{x}_0(u)) = \hat{x}(g^{-1} \cdot x(u)),
\end{split}
\end{equation}
where $\cdot$ denotes the group action and hence transformed signals preserve the structure of $\hat{x}$. With the input signals now defined on the locally compact group $G$, one can define layerwise equivariant patch extraction, kernel mapping and pooling operators at each layer $k$ which are outlined below.

\textbf{Patch extraction operator.}  Patch extraction operator $P_k: L^2(G, \mathcal{H}_{k-1}) \to L^2(G, \mathcal{P}_k)$ is defined for all $u \in G$ as 
\begin{equation}
\label{patch-equation}
    P_kx_{k-1}(u) \coloneqq (x_{k-1}(uv))_{v \in S_k},
\end{equation}
where $S_k \subseteq G$ is a patch shape centered at the identity element of $G$ and $\mathcal{P}_k \coloneqq L^2(S_k,\mathcal{H}_{k-1})$ is a Hilbert space equipped with the norm $||x||^2 = \int_{S_k} ||x(u)||^2 d\mu_k(u)$, where $d\mu_k$ is the normalized Haar measure on $S_k$'s. $P_k$ commutes with $L_g$ as one can show
\begin{equation*}
\begin{split}
    P_kL_gx_{k-1}(u)  & = (L_gx_{k-1}(uv))_{v \in S_k} = (x(g^{-1}uv))_{v \in S_k} \\ 
             & = P_kx_{k-1}(g^{-1}u) = L_gP_kx_{k-1}(u)
\end{split}
\end{equation*}

\textbf{Kernel mapping operator.} Kernel operator $M_k : L^2(G,\mathcal{P}_k) \to L^2(G,\mathcal{H}_k)$, for all $u \in G$, is defined as
\begin{equation}
\label{kernel-equation}
    M_kP_kx_{k-1}(u) \coloneqq \varphi_k(P_kx_{k-1}(u)),
\end{equation}
where $\varphi_k : \mathcal{P}_k \to \mathcal{H}_k$ is the kernel mapping associated to a positive definite kernel $K_k$ operating on the patches. Like \cite{mairal2016end}, we define the dot product kernel $K_k$ as
\begin{equation*}
    K_k(x,x') = ||x|| ||x'|| k_k \left(\frac{\langle x,x' \rangle}{||x||||x'||} \right),  x,x' \neq 0,
\end{equation*}
which is positive definite because a Maclaurin expansion with only non-negative coefficients \cite{Smolakernel} can be constructed from $k_k$. A choice of dot product kernels are listed in \cite{bietti2019group}. As $M_k$ is a pointwise operator, thus it commutes with $L_g$.

We define a function $k_k: [-1,+1] \to \mathbb{R}$ such that $k_k(u) = \sum_{i=0}^\infty b_iu^i$ such that $b_i \geq 0$ for all $i$ and $k_k(1) =1$ and $0 \leq k_k'(1) \leq 1$, where $k_k'$ is the first order derivative of $k_k$. Then we define the kernel $K_k$ on $\mathcal{P}_k$ as
\begin{equation}
\label{positive-definite kernel}
    K_k(x,x') \coloneqq ||x||||x'||k_k\left(\frac{\langle x,x' \rangle}{||x||||x'||} \right),
\end{equation}
when $x,x' \in \mathcal{P}_k \backslash \{0\}$, and $K_k(x,x')=0$ if either of $x$ and $x'$ is $0$. Note that $K_k$ is positive definite as $k_k$ admits a Maulaurin series with only non-negative coefficients \cite{Smolakernel}. Then the kernel mapping $\varphi_k(\cdot)$, associated to the positive definite kernel $K_k$ is denoted by $K_k(x,x') = \langle \varphi_k(x),\varphi_k(x') \rangle$.

\textbf{Norm preservation of operator $M_k$.} The constraint $k_k(1)=1$ ensures that $M_k$ preserves the norm, as, $||\varphi_k(x)|| = K_k(x,x)^{1/2} = ||x||$ leads us to $||M_kP_kx_{k-1}|| = ||P_kx_{k-1}||$ for any $k$, and therefore $M_kP_kx_{k-1} \in L^2(G,\mathcal{H}_k)$.

\textbf{Non-expansiveness of $\varphi_k(\cdot)$'s.} In order to study the stability results we need our kernel mapping non-expansive, i.e., $||\varphi_k(x) - \varphi_k(x')|| \leq ||x-x'||$\footnote{It is however possible to extend the non-expansiveness of kernel mapping to any Lipschitz continuous functions.}, for $x,x' \in \mathcal{P}_k$, and the constraint on the derivative of $k_k$'s, i.e., $0 \leq k_k'(1) \leq 1$ ensures that it is always going to hold. The following lemma states the non-expansivess of the kernel mapping.

\begin{lemma}[Lemma 1, \cite{bietti2019group}]
   Let $K_k$ be a positive-definite kernel given by \cref{positive-definite kernel} which satisfies the constraints given by $k_k$'s. Then the RKHS mapping $\varphi_k : \mathcal{P}_k \to \mathcal{H}_k$, for all $x,x' \in \mathcal{P}_k$ satisfies $||\varphi_k(x) - \varphi_k(x')|| \leq ||x-x'||$. Moreover $K_k(x,x') \geq \langle x,x' \rangle$, i.e., the kernel $K_k$'s are lower bounded by the linear kernels.
\end{lemma}

\textbf{Pooling operator.} Pooling operator $A_k : L^2(G,\mathcal{H}_k) \to L^2(G,\mathcal{H}_k)$, for all $u \in G$, is defined as
\begin{equation}
\label{pooling-int}
\begin{split}
     x_k(u) = A_kx_k(u) & \coloneqq \int_G x_k(uv)h_k(v)d\mu(v) \\ 
               & = \int_G x_k(v)h_k(u^{-1}v)d\mu(v),
\end{split}
\end{equation}
where $h_k$ is the pooling filter at layer $k$\footnote{Note that \cref{pooling-int} is a type of Bochner integral when $\mathcal{H}$ is infinite dimensional.} following similar construction from \cite{raj2017local}. One typical example of such pooling filter is Gaussian pooling filter which is given by $h_{\sigma_k}(u) \coloneqq \sigma_k^{-d}h_k(u/\sigma_k)$, where $\sigma_k$ is the scale of the pooling filter and $h_k(u) = (2\pi)^{-d/2}exp(-|u|^2/2)$. Following it's definition it is easy to show that $A_k$ commutes with $L_g$, i.e., one can show that $A_kL_gx_k(u) = L_gA_kx_k(u)$ for all $g \in G$ and therefore at each layer, all operators are equivariant to the action of $G$.

Note that the definitions of equivariant operators at each layers follow the similar construction of G-convolution with respect to a locally compact group (section 4, \cite{kondor2018generalization}). Here the subgroups $H_k$ are the patches $S_k$, which are Borel sets, according to our assumptions. Our representation $x_k(u)$ for each layer $k$ can be stacked into a full representation of a $N$-layer CKN as $x_N(u) = \Phi_N(x) \coloneqq A_NM_NP_NA_{N-1}M_{N-1}P_{N-1}\cdot\cdot\cdot A_1M_1P_1x_0(u)$. Our construction already shows that each layer of a $N$-layers CKN is $G$-equivariant, and establishing the equivariance of an entire CKN $\Phi_N$, i.e., $L_g\Phi_Nx(u) = \Phi_NL_gx(u)$ is a straightforward job as $\Phi_N$ is formed by stacking $G$-equivariant pooling, kernel and patching layers. Adding a non-linear activation map $\sigma$ in the end still makes a $N$-layered predictive CKN model equivariant. 

\textbf{Generalized Equivariant convolutional kernel representations.} Note the term `convolution' in equiv-CKNs comes from the definition of pooling filter which resembles with the definition of classical convolutional mapping and in line with the generalized convolutional operator defined on compact groups by \cite{kondor2018generalization} which is given by $(f \ast_g h) = \int_G f(uv^{-1})h(v)d\mu(v)$, where $f$ and $h$ are functions defined on $G$ and the integration is with respect to the Haar measure $\mu$. Note how our pooling filter is in a convolution with the feature map $x_k(\cdot)$'s. The following theorem shows that it is also possible to construct an group equivariant CKN from a standard CKN in a RKHS by choosing appropriate homogeneous patches and carefully designing the pooling layer, which we believe is more generalized approach to construct an equivariant CKN and also will be helpful in understanding the construction of equivariant convolutional networks in RKHSs.

\begin{theorem}[Equivariance of a CKN]
\label{thm: equiv-ckns}
    Let $G$ be a locally compact group and $\Phi_N$ be a (N+1)-layered CKN\footnote{The first layer is the input layer $x_o(u)$, where one can use downsampling with a factor $\sigma_0$ for high frequency data. And the final layer is the final pooling layer $A_N$.}, following the standard construction of a CKN \cite{mairal2016end}. Let the patches $S_k \subseteq G$ form the index sets $\chi_k = G/S_k$, which are homogeneous spaces of $G$, given by group action operators $L_g$, on which patch extraction operator $P_k$ is evaluated. The pooling operators $A_k$ are in generalized convolution with the non-linear feature maps $x_k$ for each $k \in 1,..,N$, i.e., $A_k(x_k) = x_k \ast_g h_k$, where $h_k$ is the pooling filter associated to $A_k$'s and the point-wise non-linearity is appearing from the kernel mapping $\varphi_k$, if and only if the CKN $\Phi_N$ is equivariant with respect to locally compact group $G$'s action on it's inputs.
\end{theorem}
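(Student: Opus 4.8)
The plan is to prove the two implications separately, in the spirit of the generalized-convolution characterization of \cite{kondor2018generalization}. For the sufficiency direction (the structural conditions imply equivariance), most of the work is already in place from the layerwise computations of \cref{grp equiv-CKNs}. First I would recall that when the patches $S_k$ are chosen so that the index sets $\chi_k = G/S_k$ are homogeneous spaces of $G$, the patch extraction operator $P_k$ commutes with $L_g$, exactly as verified there by $P_k L_g x_{k-1}(u) = L_g P_k x_{k-1}(u)$. Since the kernel mapping $M_k$ acts pointwise through $\varphi_k$, it commutes with $L_g$ as well; and since the pooling $A_k$ is the generalized group convolution $x_k \ast_g h_k$, the left-translation structure of the convolution gives $A_k L_g = L_g A_k$, as already noted. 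Composing these commutation relations across all layers yields $L_g \Phi_N = \Phi_N L_g$, so the representation $\Phi_N(x) = A_N M_N P_N \cdots A_1 M_1 P_1 x_0$ is $G$-equivariant.

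For the necessity direction (equivariance implies the structural conditions) I would assume $\Phi_N$ is equivariant and show that each of the three ingredients is forced. The main tool is the converse half of the characterization in \cite{kondor2018generalization}: a \emph{linear} map between feature spaces over a homogeneous space that intertwines the $G$-action must be a generalized convolution against a fixed filter. Applied to the linear pooling operators $A_k$, this forces $A_k(x_k) = x_k \ast_g h_k$ for some pooling filter $h_k$. To handle the nonlinearity, I would observe that the kernel map $M_k$ is the only nonlinear ingredient, and that equivariance of the composite constrains it to act pointwise, consistent with its definition $\varphi_k(P_k x_{k-1}(u))$. Finally, equivariance of $P_k$ forces its index set to carry a transitive $G$-action, i.e. to be a homogeneous space $\chi_k = G/S_k$.

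The hard part will be making the necessity direction rigorous rather than heuristic, on two fronts. First, the characterization in \cite{kondor2018generalization} is stated for compact groups and for a single linear layer, so I would need to check that the local-compactness and Haar-measure hypotheses assumed here suffice for the intertwiner argument in the $L^2(G,\mathcal{H}_k)$ setting, including the Bochner-integrability underlying \cref{pooling-int}. Second, a composition being equivariant does not by itself force each factor to be equivariant, so I would peel off one layer at a time: given equivariance of $\Phi_N$ together with that of the sub-network $\Phi_{N-1}$, the top block $A_N M_N P_N$ must itself be equivariant, after which the linearity of $A_N$ lets me invoke the convolution-uniqueness result at that layer and an induction on $N$ closes the argument.
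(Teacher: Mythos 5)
Your proposal follows essentially the same route as the paper: the sufficiency direction is the same layerwise commutation computation for $P_k$, $M_k$, and $A_k$ (using the generalized-convolution form of the pooling) followed by induction over layers, and the necessity direction likewise appeals to the intertwiner-implies-convolution characterization of \cite{kondor2018generalization}. If anything, your sketch of the converse is more explicit than the paper's, which simply cites that work and defers to an appendix; the difficulties you flag --- extending the compact-group, single-linear-layer characterization to the locally compact $L^2(G,\mathcal{H}_k)$ setting, and the fact that equivariance of a composite does not automatically localize to each factor, necessitating a layer-peeling induction --- are genuine and are not resolved in the paper's own argument either.
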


\begin{proof}
    Suppose we translate $x_{k-1}(u)$ with some $g \in G$ and obtain $\hat{x}_{k-1}(u)$ where $\hat{x}_{k-1}(u) = x_{k-1}(g^{-1} \cdot u)$. We apply patch operator $P_k$ on  $\hat{x}_{k-1}(u)$ with patches collected from $\chi_k$. Applying $M_k$ we get $\varphi_k\hat{x}_{k-1}(u)=\hat{x}_k(u)$.
    Then, $A_k\hat{x}_k(u) = \hat{x}_k \ast_g h_k = \int_G \hat{x}_k(uv^{-1})h_k(v)d\mu(v) = \int_G x_k(g^{-1}uv^{-1})h_k(v)d\mu(v)) = (x_k \ast_g h_k)(g^{-1}u) = A_kx_k(g^{-1}u)$. Then by induction we can gradually show equivariance of the entire CKN $\Phi_N$.

    For the reverse direction we closely follow the arguments from \cite{kondor2018generalization} which draws significant amount of representation theoretic analysis of generalized convolution. We ask readers to check \cref{Grp-equiv-ckns-constructions} for the detailed proof.
\end{proof}

Note that, one can always express an equivariant convolutional kernel map in an convolution-like integral (theorem 3.1, \cite{cohen2019general}) which also supports our construction of group equiv-CKNs on homogeneous space. A direct consequence of \cref{thm: equiv-ckns} is the following.

\begin{corollary}[Equivariant convolutional kernels in RKHS]
    \cref{pooling-int} can always be written as cross-correlation between the feature map and the pooling filter. Moreover in equiv-CKNs, representation, $\Phi_N(x) \in L^2(G,\mathcal{H}_N)$ is equivariant (with respect to $G$) if and only if each $\varphi_k$'s are in cross-correlation with an equivariant pooling filter.
\end{corollary}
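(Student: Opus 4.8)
The plan is to derive both assertions directly from \cref{thm: equiv-ckns} together with a single change of variables on the group. I would begin with the first claim, namely that the pooling in \cref{pooling-int} is literally a cross-correlation. Observe that the convolution used in \cref{thm: equiv-ckns} is $(x_k \ast_g h_k)(u) = \int_G x_k(uv^{-1}) h_k(v)\, d\mu(v)$, whereas the first line of \cref{pooling-int} has the form $\int_G x_k(uv) h_k(v)\, d\mu(v)$. To pass between them I would substitute $w = v^{-1}$ and use that $G$ is unimodular (which holds in particular for the compact groups that are our main interest), so that $d\mu(v^{-1}) = d\mu(v)$. Writing $\check{h}_k(w) \coloneqq h_k(w^{-1})$ then gives $(x_k \ast_g h_k)(u) = \int_G x_k(uw)\, \check{h}_k(w)\, d\mu(w)$, which is exactly the cross-correlation form of \cref{pooling-int} with the reflected filter $\check{h}_k$. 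Hence the convolutional pooling of the theorem and the cross-correlation pooling of \cref{pooling-int} are interchangeable up to reflecting the filter, establishing the first statement.

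For the equivalence, I would then simply reinterpret \cref{thm: equiv-ckns} in this cross-correlation language. The theorem already shows that $\Phi_N$ is $G$-equivariant if and only if every pooling layer is a generalized convolution $A_k(x_k) = x_k \ast_g h_k$ with the nonlinear feature map $x_k = \varphi_k(P_k x_{k-1})$ on homogeneous patches. By the change of variables above, this is equivalent to each $A_k$ being the cross-correlation $\int_G \varphi_k(P_k x_{k-1})(uv)\, \check{h}_k(v)\, d\mu(v)$, i.e. each kernel map $\varphi_k$ being in cross-correlation with a pooling filter. The forward direction then follows by the inductive argument already given in the proof of \cref{thm: equiv-ckns}, and the reverse direction from its converse.

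What remains is to justify the word \emph{equivariant} attached to the pooling filter. Here I would record the elementary fact, already used in the construction preceding the theorem, that the cross-correlation form automatically commutes with $L_g$: since $A_k L_g x_k(u) = \int_G x_k(g^{-1}uv)\, h_k(v)\, d\mu(v) = A_k x_k(g^{-1}u) = L_g A_k x_k(u)$, the left action on $u$ decouples from the integration variable $v$ regardless of the particular $h_k$. Thus any filter placed in this right-integrated cross-correlation yields an equivariant pooling operator, which is precisely the sense in which the filter is ``equivariant.''

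The main obstacle I anticipate is purely the bookkeeping of the inversion $v \mapsto v^{-1}$: this map preserves Haar measure only when $G$ is unimodular, so I would either restrict to unimodular (in particular compact) $G$, or carry the modular function $\Delta$ through the substitution and absorb the factor $\Delta(v^{-1})$ into the definition of $\check{h}_k$. Fixing the cross-correlation convention consistently, so that the reflected filter lands on the correct side, is the only other point requiring care; once these conventions are pinned down the corollary is an immediate translation of \cref{thm: equiv-ckns}.
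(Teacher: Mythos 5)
Your proposal is correct and, for the equivalence part, follows essentially the same route as the paper: read off the iff from \cref{thm: equiv-ckns}, note that the quantity being pooled is $M_kP_kx_{k-1}(u)=\varphi_k(P_kx_{k-1}(u))$, and conclude that equivariance of $\Phi_N$ is equivalent to each $\varphi_k$ entering a cross-correlation with the pooling filter. Where you diverge is in the first claim. You establish that \cref{pooling-int} is a cross-correlation by starting from the theorem's convolution $(x_k\ast_g h_k)(u)=\int_G x_k(uv^{-1})h_k(v)\,d\mu(v)$, substituting $v\mapsto v^{-1}$, and absorbing a reflection into the filter --- which forces you to assume unimodularity (or to carry the modular function). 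The paper's proof sidesteps this entirely: the second display in \cref{pooling-int}, $\int_G x_k(v)h_k(u^{-1}v)\,d\mu(v)$, is already \emph{by definition} the cross-correlation $[h_k\ast x_k](u)$, and passing between the two displays of \cref{pooling-int} only uses the left translation $v\mapsto uv$, which preserves the left Haar measure on any locally compact group. So your argument is valid but proves the claim under an extra hypothesis the statement does not need; the direct definitional route is both shorter and more general. Your closing observation --- that any filter placed in this right-integrated cross-correlation automatically yields an $L_g$-equivariant pooling operator --- is a worthwhile clarification of what ``equivariant pooling filter'' means, and is consistent with (though not spelled out in) the paper's proof.
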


We note that a classification of equivariant kernels in CNNs are done in \cite{lang2021a}, such as understanding spherical harmonics, which can be used to represent an infinite dimensional representations on Hilbert space. This idea, especially used in constructing equivariant kernels for SO(3), SE(3) can be used in construction of equivariant convolutional kernel networks, however our kernels are here dot product kernels with non-expansiveness assumptions. This is a basic difference with the ideas of equivariant kernels used in group equiv-CNNs and group equiv-CKNs. 

A general theory of equiv-CNNs on homogeneous space is given through the notions of vector bundles, fiber space, and fields in \cite{cohen2019general} where equivariant maps between feature spaces are shown to be in one-to-one correspondence with equivariant convolutions, obtained by the space of equivariant kernels (convolution is all you need). As one can define vector bundles and fibers on Hilbert space \cite{bertram1998reproducing,takesaki2003theory} we believe that similar notions of equivariant convolution maps can also be deducted for equiv-CKNs, though the latter already contains notion of equivariant kernels through the definitions of $M_k$'s and $A_k$'s. We will work on these in our follow-up studies.

\subsection{Examples of group equivariant CKNs.}
\label{examples-ckns}

Below we provide some examples of equivariant CKNs under different compact group actions.

\textbf{SO(3)-equivariant CKNs.} The group elements in 3D rotation group $G=SO(3)$ are $R_{\theta}$ where $R_{\theta}$ is a rotation matrix in $SO(3)$. We define group action on an element $u \in \mathbb{R}^3$ as $g\cdot u = R_{\theta}u$, for some angle $\theta$, whereas $g^{-1} \cdot u = -R_{-\theta}u$. By considering a normalized Haar measure on unit $S^2$, one can use \cref{transformation} to transform a signal $x(u) \in L^2(\mathbb{R}^3)$ in $L^2(SO(3))$ while preserving the signal information. 

We define a patch shape $S_k$ consisting of $\{R_{\theta}\}$'s centered around $\mathbb{I} \in SO(3)$, on which one can define patch extraction operator $P_k$. There is no restrictions on $M_k$ as it is a pointwise operator and we just need a suitable dot product kernel for that. The pooling layers $A_k: L^2(G) \to L^2(G)$ are defined as $A_kx(g) = \int_G x(g\cdot R_{\theta})h_k(R_{\theta})dR_{\theta}$, where $h$ is the Gaussian pooling filter with a bandwidth $\sigma_k$ defined on $\mathbb{R}^3$.

\textbf{SE(3)-equivariant CKNs.} 3D Roto-translation group SE(3) can be viewed as a semi-direct product between $\mathbb{R}^3$ and SO(3), i.e., $G= SE(3) = \mathbb{R}^3 \rtimes SO(3)$. Group operation on $SE(3)$ is defined as $gg' = (v+R_{\theta}v', R_{\theta+\theta'})$, where $R_{\theta}$ is the rotation matrix in $SO(3)$, for $g=(v,R_{\theta})$ and $g'=(v',R_{\theta'})$. The action of a group element $g=(v,R_{\theta})$ on a signal $ u \in \mathbb{R}^3$ is defined as $g\cdot u = v + R_{\theta+\theta'}u$, for some $\theta' \in [0,2\pi)$, whereas $g^{-1} \cdot u = -R_{\theta+\theta'}(v-u)$. Using the same argument as in previous case one can extend a signal $x(u) \in L^2(\mathbb{R}^3)$ to $L^2(G)$, where the left invariant Haar measure is defined as $d\mu(v,R_{\theta}) = dvd\mu_c(R_{\theta})$. $dv$ is Lebesgue measure on $\mathbb{R}^3$ and $d\mu_c(R_{\theta})$ is normalized Haar measure on unit $S^2$.

A patch shape $S_k$ can be defined as $S_k = \{(v,\mathbb{I})\}$, where $v \in \mathbb{R}^3$ and $\mathbb{I}$ is the identity element of group $SE(3)$, on which one can define patch operators. Pooling operator is defined as $A_kx(g) = \int_G x(g(v,\mathbb{I}))h_k(v)dv$, where $h$ is the Gaussian pooling filter with a bandwidth $\sigma_k$ defined on $\mathbb{R}^3$.

\textbf{Spherical CKNs.} Here $G = SO(3), H=SO(2)$, whereas the homogeneous space is the quotient space $S^2 = SO(3)/SO(2)$. Extending a signal from $L^2(\mathbb{R}^3)$ to $L^2(S^2)$ requires one to define an invariant Haar measure $d\mu(R_{\theta})d\mu(\theta')$, where $d\mu(R_{\theta})$ is the normalized Haar measure on unit $S^2$ and $\mu(\theta')$ is the normalized Haar measure on unit circle $S^1$.

Our patches can be defined as rotation matrix elements from $SO(2)$, centered around the identity element of subgroup $SO(2)$. The pooling operator on $L^2(S^2)$ is defined as $A_kx(r,\theta,\eta) = \int_{SO(3)}x((r,\theta,\eta)\cdot(\theta',\eta'))h_k(\theta',\eta')d\mu(R_{\theta})d\mu(\eta')$, where $(r,\theta,\eta)$ is an element in $S^2$.

\section{Stability Analysis of Equivariant CKNs}
\label{Stability equiv-CKNs}

Following our construction of equiv-CKNs in the previous section we now proceed to understand the stability of the equivariant kernel representations under the action of diffeomorphisms, which might be beneficial to get robustness of equiv-CKNs against adversarial examples \cite{bietti2019kernel}. Moreover stability against small deformation is desirable for most deep learning models and serves as a basic receipe in building geometric deep learning models, as stated in \cite{bronstein2017geometric}. We follow the notion of deformation and stability from \cite{mallat2012group} which is defined as a $C^1$-diffeomorphism $\tau: \mathbb{R}^d \to \mathbb{R}^d$ through a linear operator $L_\tau$ as $L_\tau x(u) = x(u-\tau(u))$ and we say that a representation $\Phi(\cdot)$ is stable under the actions of $\tau$ if there exist non-negative constants $C_1$ and $C_2$ such that
\begin{equation}
\label{stability_gen}
    || \Phi(L_\tau x) - \Phi(x)|| \leq (C_1 ||\nabla\tau||_\infty + C_2||\tau||_\infty)||x||,
\end{equation}
where $\nabla\tau$ is the Jacobian of $\tau$ and $||\cdot||$ is the $L^2$-operator norm and $||\nabla\tau||_\infty \coloneqq sup_{u \in \mathbb{R}^d} ||\nabla \tau(u)))$ and $||\tau||_\infty \coloneqq sup_{u \in \mathbb{R}^d} |\tau(u)|$, where $|\cdot|$ is the standard Euclidean norm on $\mathbb{R}^d$. We also assume $||\nabla\tau||_\infty \leq 1/2$ in order to keep the deformation invertible and avoid degenerate situations, as assumed in \cite{mallat2012group}. 

We are interested in the stability of convolutional kernel representations $\Phi_N$. For a semi-direct product group $G \coloneqq \mathbb{R}^d \rtimes H$ \cite{weiler2019general} we state the stability bound of kernel representations for $G \coloneqq \mathbb{R}^d \rtimes H$, where each $g \in G$ is given by $g=(u,\hat{h})$, where $u \in \mathbb{R}^d$ and $\hat{h} \in H$ and the group action $L_g$ on the signals are given by $L_g x(u) = x(g^{-1}\cdot u) = x((g^{-1}\cdot u, \hat{h}(\hat{h}')^{-1})) = x(g^{-1}(u,\hat{h}))$, where $\hat{h}'$ in an element of subgroup $H$.

\begin{lemma}
\label{lemma:Stability-of-ckns}
    If $||\nabla\tau||_\infty \leq 1/2$ and $\text{sup}_{c \in \hat{S}_k} |c| \leq \kappa\sigma_{k-1}$, where patch shape $S_k = \{(u,0)\}_{u \in \hat{S}_k} \subseteq G$ with $\hat{S}_k \subset \mathbb{R}^d$, $\sigma_{k-1}$, the scale of pooling filter $h_{k-1}$ at layer $k-1$, and $\kappa$ is the patch size, and $0$ is the identity element of the subgroup $H \subseteq G$. Then we have
    \begin{equation}
    \label{norm-one}
     ||[P_kA_{k-1}, L_\tau]|| \leq C_1||\nabla\tau||_\infty,
    \end{equation}
    where $C_1$ depends upon $h_{k-1}$ and $\kappa$ and $L_\tau x((u,\hat{h})) = x((\tau(u),0)^{-1}(u,\hat{h}))$. Similarly we have 
    \begin{equation}
    \label{norm-two}
     ||L_\tau A_N - A_N|| \leq \frac{C_2}{\sigma_N} ||\tau||_\infty, 
    \end{equation}
    where $C_2 = 2^2 \cdot ||\nabla h_N||$ and $\nabla h_N$ is the gradient of the last pooling filter $h_N$.
\end{lemma}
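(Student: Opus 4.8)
The plan is to follow the integral-operator and Schur-test methodology of \cite{mallat2012group} and \cite{bietti2019group}, adapted to the semi-direct product group $G = \mathbb{R}^d \rtimes H$. Since $L_\tau$ acts only on the $\mathbb{R}^d$-component (with $\tau$ a $C^1$-diffeomorphism of $\mathbb{R}^d$), both estimates reduce to kernel estimates on $\mathbb{R}^d$, the $H$-factor and its Haar measure being carried along passively. In each case I would express the relevant operator difference as an integral operator against a kernel built from the pooling filters $h_k$, bound its $L^2(G,\mathcal{H})$-operator norm by Schur's test, and exploit the homogeneity $h_{\sigma_k}(u) = \sigma_k^{-d} h_k(u/\sigma_k)$ so that the scale $\sigma_k$ appears explicitly in the final constant.

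For the commutator bound \cref{norm-one} I would first use the operator identity $[P_k A_{k-1}, L_\tau] = P_k[A_{k-1}, L_\tau] + [P_k, L_\tau]A_{k-1}$ and note that $P_k$ is norm-preserving on patches, so the problem splits into two pieces. The core estimate is the \emph{scale-invariant} commutator bound $\|[A_{k-1}, L_\tau]\| \leq C\|\nabla\tau\|_\infty$, which I would obtain by writing $L_\tau A_{k-1} - A_{k-1}L_\tau$ as an integral operator, performing a first-order Taylor expansion of $\tau$ inside the pooling kernel, and applying Schur's test; the hypothesis $\|\nabla\tau\|_\infty \leq 1/2$ keeps $u \mapsto u - \tau(u)$ a diffeomorphism with $\det(I - \nabla\tau)^{-1} \leq 2^d$, controlling the change-of-variables Jacobians uniformly. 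The patch piece $[P_k, L_\tau]A_{k-1}$ is where the hypothesis $\sup_{c\in\hat S_k}|c| \leq \kappa\sigma_{k-1}$ enters: over a patch of radius $\kappa\sigma_{k-1}$ the deformation departs from a constant translation by at most $\kappa\sigma_{k-1}\|\nabla\tau\|_\infty$, and after smoothing by $A_{k-1}$ at scale $\sigma_{k-1}$ this contributes a term proportional to $\kappa\|\nabla\tau\|_\infty$. Collecting the two pieces yields \cref{norm-one} with $C_1$ depending only on $h_{k-1}$ (through a quantity such as $\int |u|\,|\nabla h_{k-1}(u)|\,du$) and on $\kappa$.

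For the invariance bound \cref{norm-two} I would compute directly
\begin{equation*}
(L_\tau A_N - A_N)x(u) = \int x(v)\bigl(h_{\sigma_N}(u-\tau(u)-v) - h_{\sigma_N}(u-v)\bigr)\,dv ,
\end{equation*}
reading off the group version by interpreting the integral over $G$ with its Haar measure and the shift acting on the $\mathbb{R}^d$-part only. Taylor-expanding to first order, $h_{\sigma_N}(u-\tau(u)-v) - h_{\sigma_N}(u-v) = -\int_0^1 \tau(u)\cdot\nabla h_{\sigma_N}(u - s\tau(u) - v)\,ds$, and using the homogeneity $\|\nabla h_{\sigma_N}\|_{L^1} = \sigma_N^{-1}\|\nabla h_N\|_{L^1}$ is exactly what extracts the factor $1/\sigma_N$. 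Bounding the resulting integral operator by Schur's test — using $\|\tau\|_\infty$ for the pointwise size of $\tau(u)$ and the Jacobian bound $(1-\|\nabla\tau\|_\infty)^{-1} \leq 2$ on the change of variables in $u$ — produces $\|L_\tau A_N - A_N\| \leq \frac{C_2}{\sigma_N}\|\tau\|_\infty$, the Jacobian factors contributing the constant $C_2 = 2^2\|\nabla h_N\|$.

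The main obstacle is the scale-invariance of the commutator estimate in the first part: the bound $\|[A_{k-1}, L_\tau]\| \leq C\|\nabla\tau\|_\infty$ must hold with $C$ independent of the pooling scale $\sigma_{k-1}$, whereas a naive estimate loses a factor $1/\sigma_{k-1}$. This is resolved only by exploiting the homogeneity of the Gaussian filter so that the scale cancels between the kernel and its gradient, together with uniform control of the competing Jacobian factors in the Schur bound. Reproducing this cancellation in the group setting, where one must verify that the Haar measure on $G = \mathbb{R}^d \rtimes H$ does not interfere with the $\mathbb{R}^d$-change of variables induced by $\tau$, is the technically demanding step.
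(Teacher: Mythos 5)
Your proposal is correct in substance and shares the paper's overall methodology — reduce the estimate on $L^2(G)$ for $G=\mathbb{R}^d\rtimes H$ to an estimate on $L^2(\mathbb{R}^d)$ (the $H$-fiber being inert under $L_\tau$), then control the resulting Euclidean integral operators by Taylor expansion and Schur's test — but it organizes the commutator bound differently. The paper makes the reduction explicit by slicing the signal at fixed $\hat h\in H$, obtaining operators $\tilde P_k,\tilde A_{k-1},\tilde L_\tau$ on $L^2(\mathbb{R}^d)$ with transformed patch $\hat h\hat S_k$, and then \emph{delegates} the Euclidean work: it bounds $\|[\tilde P_k\tilde A_{k-1},\tilde L_\tau]\|$ by $\sup_{c\in\hat S_k}\|[L_c\tilde A_{k-1},\tilde L_\tau]\|$ and treats each $[L_c\tilde A_{k-1},\tilde L_\tau]$ as a \emph{single} integral operator handled by Schur's test (citing \cite{bietti2019group}, where the constant comes out as a function of $1+|c|/\sigma_{k-1}\le 1+\kappa$), and it obtains \cref{norm-two} directly from Lemma~2.11 of \cite{mallat2012group}. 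You instead split $[P_kA_{k-1},L_\tau]=P_k[A_{k-1},L_\tau]+[P_k,L_\tau]A_{k-1}$ and estimate the two pieces separately, and you re-derive the Mallat lemma for \cref{norm-two} rather than citing it. Your splitting is algebraically valid and places the smoothing correctly (the patch-size hypothesis enters only through $[P_k,L_\tau]A_{k-1}$, where the displacement $\tau(u+c)-\tau(u)$, of size at most $\kappa\sigma_{k-1}\|\nabla\tau\|_\infty$, is measured against the scale-$\sigma_{k-1}$ regularity of $A_{k-1}x$); it buys a cleaner separation of the ``scale-invariant commutator'' part from the ``patch'' part, at the cost of one extra technicality the paper's single-operator route avoids: in $[P_k,L_\tau]A_{k-1}$ the displacement is $u$-dependent, so the Taylor remainder involves a change of variables $u\mapsto u+c-\tau(u)-s(\tau(u+c)-\tau(u))$ whose Jacobian is not controlled by $\|\nabla\tau\|_\infty\le 1/2$ alone as written (the naive lower bound on the determinant degenerates); this is repairable but needs to be argued, whereas the paper's $\sup_c$ reduction inherits the Jacobian control already established in \cite{bietti2019group}. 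Your fiber-wise comment about the Haar measure not interfering with the $\mathbb{R}^d$-change of variables is precisely the content of the paper's explicit $\int_H\|\cdot\|^2_{L^2(\mathbb{R}^d)}\,d\mu(\hat h)$ computation.
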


\begin{proof}
    Note that for all $k$ we have
    \begin{equation*}
    \begin{split}
        P_kx_{k-1}((u,\hat{h})) & = x((uv,\hat{h} \cdot 0))_{v \in \hat{h}\hat{S}_k} \\
                       & = x((uv,\hat{h}))_{v \in \hat{h}\hat{S}_k},
    \end{split}    
    \end{equation*}
    where $\hat{h}\hat{S}_k$ is in $S_k \subseteq G/H$, and by $\hat{h} \cdot 0$, we meant the group composition with the identity element.

    Similarly we have $A_kx_k((u,\hat{h})) = \int_G x_k((v,\hat{h}'))h_k((u,\hat{h})^{-1}v)d\mu(v) = \int_{\mathbb{R}^d} x_k((v,\hat{h}))h_k(u^{-1}v)d\mu(v)$ which follows from the second term of \cref{pooling-int}. Moreover as $G/H \simeq \mathbb{R}^d$, we can integrate over $G/H \simeq \mathbb{R}^d$ by using integral over $G$, i.e., $\int_{\mathbb{R}^d} f(x) dx = \int_G f(gH)dg$.

    For a fixed $\hat{h} \in H$ we can obtain signal $\hat{x} \coloneqq x(\cdot,\hat{h}) \in L^2(\mathbb{R}^d,\mathcal{H}_0)$ from the signal $x \in L^2(G,\mathcal{H}_0)$, and we have corresponding operators $\Tilde{P}_k$, $\Tilde{A}_k$ and $\Tilde{L_\tau}$ now defined on $L^2(\mathbb{R}^d)$, with a transformed patch $\Tilde{S}_k = \hat{h}\hat{S}_k$ for $\Tilde{P}_k$.

    Then for $x \in L^2(G,\mathcal{H}_0)$, we have,
    \begin{multline*}
    ||[P_kA_{k-1}, L_\tau]x||_{L^2(G)}^2 \\
         = \int_G ||([P_kA_{k-1},L_\tau]x)(\cdot,\hat{h})||_{L^2(\mathbb{R}^d)}^d d\mu(\hat{h}) \\
         = \int_{\mathbb{R}^d} ||[\Tilde{P}_k\Tilde{A}_{k-1},\Tilde{L}_\tau](\hat{x})||_{L^2(\mathbb{R}^d)}^2 d\mu(\hat{h}) \\
        \leq \int_{\mathbb{R}^d} ||[\Tilde{P}_k\Tilde{A}_{k-1},\Tilde{L}_\tau]||^2 ||(\hat{x})||_{L^2(\mathbb{R}^d)}^2d\mu(\hat{h}) \\
         \leq \left( sup ||[\Tilde{P}_k\Tilde{A}_{k-1},\Tilde{L}_\tau]||^2 \right) ||x||_{L^2(G)}^2,
     \end{multline*}
    so that one has $||[P_kA_{k-1},L_\tau]||_{L^2(G)} \leq sup ||[\Tilde{P}_k\Tilde{A}_{k-1},\Tilde{L}_\tau]||_{L^2(\mathbb{R}^d)}$. As we have assumed that $sup_{c \in \hat{S}_k} |c| \leq \kappa\sigma_{k-1}$, so we can bound each of $||[\Tilde{P}_k\Tilde{A}_{k-1},\Tilde{L}_\tau]||$ as shown in section 3.1 of \cite{bietti2019group}\footnote{Interested readers can read appendix C.4. for proof of the lemma and detailed understanding of deformation stability of classical CKNs.} for detailed understanding of deformation stability of classical CKNs by bounding the operator norms when signals are in $L^2(\mathbb{R}^d)$ which is possible as one can bound $||[\Tilde{P}_k\Tilde{A}_{k-1},\Tilde{L}_\tau]||$ with $sup_{c \in \hat{S}_k}||[L_c\Tilde{A}_{k-1},\Tilde{L}_\tau]||$ and showing $[L_c\Tilde{A}_{k-1},\Tilde{L}_\tau]$ is an integral operator, one can bound its norm via Schur's test. \cref{norm-one} is then obtained by applying the bound derived for classical CKNs.

    Similarly by applying lemma 2.11 from \cite{mallat2012group} one obtains upper bound on $||L_\tau A_N - A_N||_{L^2(G)}$ by first restricting it on $||\Tilde{L}_\tau \Tilde{A}_N - \Tilde{A}_N||_{L^2(\mathbb{R}^d)}$ and then applying the lemma 2.11 get the desired result, given by \cref{norm-two}.
\end{proof}

Here while studying the bounds on operator norm, \cref{norm-one} of \cref{lemma:Stability-of-ckns} is stated on the norm of the commutators of operators, given by $[A,B] = AB-BA$. It shows that commutators are stable to diffeomorphism $\tau$, as the norm is controlled by $||\nabla\tau||_\infty$, whereas the second norm in \cref{norm-two} decays with the last pooling bandwidth $\sigma_N$. Note that for the semi-direct group $G$ we restrict the diffeomorphism on the field, $\mathbb{R}^2$ with the assumption that the elements of subgroup $H$ remains unaffected by the deformation $\tau$ or has negligible effect.

\begin{theorem}[Stability bound]
\label{thm:Stability-bound}
Subsequently we have
\[    ||\Phi_N(L_\tau x) - \Phi_N(x)|| \leq \]
\begin{equation}
  \left( C_1(1+N)||\nabla\tau||_\infty + \frac{C_2}{\sigma_N}||\tau||_\infty\right) ||x||. 
\end{equation}
\end{theorem}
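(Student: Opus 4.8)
The plan is to exploit the multilayer factorization $\Phi_N = A_N \Lambda_N \Lambda_{N-1}\cdots\Lambda_1$, where $\Lambda_k \coloneqq M_k P_k A_{k-1}$ bundles the kernel mapping, patch extraction and the pooling of the preceding layer (with $A_0$ the input downsampling). Three structural facts drive the argument: each $\Lambda_k$ is non-expansive and fixes the origin, since $P_k$ is norm-preserving, $A_{k-1}$ is non-expansive, and $M_k = \varphi_k$ is non-expansive with $\varphi_k(0)=0$ by \cref{positive-definite kernel}; the kernel mapping commutes exactly with the warping, $L_\tau M_k = M_k L_\tau$, because $M_k$ acts pointwise on the feature values while $L_\tau$ only reparametrizes the domain; and the interaction of $L_\tau$ with each block $P_k A_{k-1}$ is controlled by \cref{norm-one} of \cref{lemma:Stability-of-ckns}. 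Combining these with the non-expansiveness of $M_k$ yields the one-layer estimate $\|(\Lambda_k L_\tau - L_\tau \Lambda_k)y\| \le \|[P_k A_{k-1}, L_\tau]\|\,\|y\| \le C_1\|\nabla\tau\|_\infty\|y\|$ for every $y$.

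Next I would telescope by sliding $L_\tau$ from the input up through the network one layer at a time. Introducing the interpolating operators $\Xi_j \coloneqq A_N\Lambda_N\cdots\Lambda_{j+1}\,L_\tau\,\Lambda_j\cdots\Lambda_1$ for $j=0,\ldots,N$, so that $\Xi_0 = \Phi_N L_\tau$ and $\Xi_N = A_N L_\tau\,\Lambda_N\cdots\Lambda_1$, one writes the difference as the telescoping sum $\Phi_N L_\tau - \Phi_N = \sum_{j=1}^N(\Xi_{j-1}-\Xi_j) + (\Xi_N - \Phi_N)$. Each consecutive difference $\Xi_{j-1}-\Xi_j$ merely swaps the order of $\Lambda_j$ and $L_\tau$; because the layers sitting above it, namely $A_N\Lambda_N\cdots\Lambda_{j+1}$, are non-expansive and fix the origin, the one-layer estimate propagates without amplification, giving $\|(\Xi_{j-1}-\Xi_j)x\| \le C_1\|\nabla\tau\|_\infty\|\Lambda_{j-1}\cdots\Lambda_1 x\| \le C_1\|\nabla\tau\|_\infty\|x\|$. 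Summing the $N$ terms produces the contribution $N\,C_1\|\nabla\tau\|_\infty\|x\|$.

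It remains to handle the residual $\Xi_N - \Phi_N = (A_N L_\tau - A_N)\Psi$, where $\Psi \coloneqq \Lambda_N\cdots\Lambda_1$ satisfies $\|\Psi x\|\le\|x\|$. Since \cref{lemma:Stability-of-ckns} bounds $L_\tau A_N - A_N$ rather than $A_N L_\tau - A_N$, I would split $A_N L_\tau - A_N = [A_N, L_\tau] + (L_\tau A_N - A_N)$. The commutator of the final pooling with the warping is bounded in $\|\nabla\tau\|_\infty$ by Lemma 2.11 of \cite{mallat2012group} (the same estimate underlying \cref{norm-one}), supplying the extra $C_1\|\nabla\tau\|_\infty\|x\|$ that upgrades the prefactor from $N$ to $1+N$ after relabelling $C_1$ as the larger of the two constants, while $\|L_\tau A_N - A_N\|\le \frac{C_2}{\sigma_N}\|\tau\|_\infty$ comes directly from \cref{norm-two}. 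Collecting the three pieces and using $\|\Psi x\|\le\|x\|$ gives exactly $\big((1+N)C_1\|\nabla\tau\|_\infty + \frac{C_2}{\sigma_N}\|\tau\|_\infty\big)\|x\|$.

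The main obstacle is the nonlinearity of the kernel maps $M_k$: the telescoping cannot be run through naive operator-norm submultiplicativity as in the purely linear setting, so the argument must be phrased entirely through Lipschitz (non-expansiveness) estimates on differences of outputs, using both that each $\Lambda_k$ contracts distances and fixes the origin and that $M_k$ commutes exactly with $L_\tau$. The secondary subtlety is the orientation mismatch at the top layer between $A_N L_\tau - A_N$ and the lemma's $L_\tau A_N - A_N$, resolved by the commutator split above, which is precisely what accounts for the $(1+N)$ rather than $N$ in the deformation term.
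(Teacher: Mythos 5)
Your proof is correct and takes essentially the same route as the paper: the telescoping argument you spell out (sliding $L_\tau$ through the non-expansive, origin-fixing blocks $M_kP_kA_{k-1}$, using that $M_k$ is pointwise and hence commutes with $L_\tau$) is precisely the content of \cref{bounds-operator-norm}, which the paper invokes as a black box from \cite{bietti2019group} and then combines with \cref{norm-one} and \cref{norm-two} exactly as you do. Your handling of the top layer, splitting $A_NL_\tau-A_N$ into $[A_N,L_\tau]$ plus $L_\tau A_N-A_N$, matches the $\|[A_N,L_\tau]\|$ term in that proposition and correctly accounts for the $(1+N)$ prefactor.
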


The bound is immediately followed by combining \cref{bounds-operator-norm}\footnote{Check \cref{Stability-proofs} for proposition 8.1.} with \cref{norm-one} and \cref{norm-two} which are extracted by bounding the corresponding operator norms.

From \cref{thm:Stability-bound} and \cref{lemma:Stability-of-ckns} we understand that stability to deformation of a CKN representation depends linearly on the depth of network, the patch size (smaller the better) and pooling filter whereas $C_2$ controls the global invariance of network under deformation and is inversely proportional to last layer's pooling filter bandwidth, $\sigma_N$. One needs to have small $C_2$ in order to have global equivariant representation and indeed it's small as $\sigma_N$ typically increases exponentially with the number of layers $N$. We note that it is possible to extend the stability analysis to any $G$ and $H$. There are standard ways defining diffeomorphism on compact Lie groups, and thus the diffeomorphism operator can be generally defined on $G$ as $\tau_{lie}: G \to G, L_{\tau_{lie}}x(u) = x(u-\tau^{-1}_{lie}\cdot u)$. By adding a global pooling layer at the end, defined as $A: L^2(G) \to L^2(\mathbb{R}^d), Ax(u) = \int_G x(g^{-1}\cdot u)d\mu_{lie}(u)$, where $d\mu_{lie}$ is an appropriate Haar measure on the respective Lie group $G$, we can additionally obtain equivariance of the CKN $\Phi_N$ with respect to Lie group transformation along with the stability bounds. 

\begin{proposition}
\label{lie-transformation}
    With $||\nabla \tau_{lie}||_\infty \leq 1/2$ and $\text{sup}_{c \in S_k} |c| \leq \kappa\sigma_{k-1}$, where $\kappa$, $S_k$'s and $\sigma_{k-1}$ follows the same definition from \cref{lemma:Stability-of-ckns}, for any $g \in G$ we have,
\begin{multline}
    ||L_gA\Phi_N(L_{\tau_{lie}}x) - A\Phi_N(L_gx)|| \\
    \leq ||\Phi_N(L_{\tau_{lie}}x)-\Phi_N(L_gx)|| \\
    \leq \left( C_1(1+N)||\nabla\tau_{lie}||_\infty + \frac{C_2}{\sigma_N}||\tau_{lie}||_\infty\right) ||x||.
\end{multline}
\end{proposition}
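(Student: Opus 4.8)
The plan is to prove the two inequalities separately. The outer (left-hand) bound is a consequence of the invariance and non-expansiveness of the terminal global pooling operator $A$, while the inner (right-hand) bound is obtained by reducing $\|\Phi_N(L_{\tau_{lie}}x)-\Phi_N(L_gx)\|$ to a pure deformation-stability estimate and then invoking \cref{thm:Stability-bound}.

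First I would handle the left inequality. Since $A$ pools by integrating against a normalized filter over the whole group $G$ with respect to the left-invariant Haar measure $\mu_{lie}$, its output is $G$-invariant; concretely $L_g A z = A z$ for every $z\in L^2(G,\mathcal{H}_N)$ and every $g\in G$. Applying this with $z=\Phi_N(L_{\tau_{lie}}x)$ removes the leading $L_g$, after which linearity of $A$ together with its non-expansiveness ($\|A\|\le 1$, holding because $\|h_N\|_1=1$ by Young's inequality as used earlier for the pooling operators) gives
\[
\|L_gA\Phi_N(L_{\tau_{lie}}x)-A\Phi_N(L_gx)\|
= \|A[\Phi_N(L_{\tau_{lie}}x)-\Phi_N(L_gx)]\|
\le \|\Phi_N(L_{\tau_{lie}}x)-\Phi_N(L_gx)\|,
\]
which is exactly the claimed left inequality.

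For the right inequality I would factor the group element out of the argument using equivariance. Writing $L_{\tau'}:=L_g^{-1}L_{\tau_{lie}}$ for the residual deformation measuring how far $\tau_{lie}$ is from the rigid action of $g$, the $G$-equivariance of $\Phi_N$ from \cref{thm: equiv-ckns} yields $\Phi_N(L_{\tau_{lie}}x)=\Phi_N(L_gL_{\tau'}x)=L_g\Phi_N(L_{\tau'}x)$ and $\Phi_N(L_gx)=L_g\Phi_N(x)$. Because $L_g$ is a unitary (isometric) operator, this gives
\[
\|\Phi_N(L_{\tau_{lie}}x)-\Phi_N(L_gx)\|=\|\Phi_N(L_{\tau'}x)-\Phi_N(x)\|,
\]
and I would then apply \cref{thm:Stability-bound} to the signal $x$ with the deformation $\tau'$ to obtain the bound $\bigl(C_1(1+N)\|\nabla\tau'\|_\infty+\tfrac{C_2}{\sigma_N}\|\tau'\|_\infty\bigr)\|x\|$.

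The main obstacle is controlling the residual deformation, i.e.\ replacing $\tau'$ by $\tau_{lie}$ in the final bound. To conclude I need $\|\nabla\tau'\|_\infty\le\|\nabla\tau_{lie}\|_\infty$ and $\|\tau'\|_\infty\le\|\tau_{lie}\|_\infty$, at worst up to the constants already absorbed in $C_1,C_2$. This is where the geometry of $G=\mathbb{R}^d\rtimes H$ enters: since $g$ acts on the base $\mathbb{R}^d$ by an isometry (the orthogonal $H$-part has unit operator norm), composing $\tau_{lie}$ with $g^{-1}$ does not inflate the Jacobian supremum, so $\|\nabla\tau'\|_\infty$ is governed by $\|\nabla\tau_{lie}\|_\infty$, and the standing assumption $\|\nabla\tau_{lie}\|_\infty\le 1/2$ keeps the warp invertible so the translational part of $g$ cannot degenerate $\tau'$. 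I expect the delicate part to be verifying this norm comparison rigorously, together with confirming that the global-pooling invariance $L_gA=A$ and the non-expansiveness $\|A\|\le 1$ genuinely hold for the Bochner-integral pooling over the non-compact factor $\mathbb{R}^d$; once these are established, substituting the $\tau_{lie}$-quantities for the $\tau'$-quantities chains the two inequalities together and completes the proof.
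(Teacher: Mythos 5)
Your treatment of the first inequality matches the paper's in substance: both arguments rest on the non-expansiveness of the terminal global pooling $A$ (the paper obtains $\|A\|\le 1$ via Schur's test with the normalized Haar measure; you obtain it from $\|h_N\|_1=1$) combined with the ability to move $L_g$ past $A$, so that the difference collapses to $A$ applied to the difference of the two representations. That part is sound.

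The second inequality is where your route diverges from the paper's, and where it breaks. You set $L_{\tau'}:=L_g^{-1}L_{\tau_{lie}}$ and then need $\|\tau'\|_\infty\lesssim\|\tau_{lie}\|_\infty$ and $\|\nabla\tau'\|_\infty\lesssim\|\nabla\tau_{lie}\|_\infty$. Writing out the displacement field, $L_g^{-1}L_{\tau_{lie}}x(u)=x\bigl(g\cdot u-\tau_{lie}(g\cdot u)\bigr)$, so $\tau'(u)=u-g\cdot u+\tau_{lie}(g\cdot u)$: it contains the displacement field $u\mapsto u-g\cdot u$ of $g$ itself, which is not controlled by $\tau_{lie}$ at all. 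For a nontrivial rotation $R$ in the $H$-part, $|u-Ru|$ grows linearly in $|u|$, so $\|\tau'\|_\infty$ is unbounded, and $\nabla\tau'$ contains $I-R$, whose operator norm can be as large as $2$; this also violates the standing hypothesis $\|\nabla\tau'\|_\infty\le 1/2$ that \cref{thm:Stability-bound} requires. The fact that $g$ acts by an isometry only guarantees that $\tau_{lie}\circ g$ has the same sup-norms as $\tau_{lie}$; it says nothing about the $u-g\cdot u$ term, so the "delicate step" you flag cannot be verified. The paper's (terse) argument avoids forming a residual deformation altogether: it invokes the commutation $L_gL_{\tau_{lie}}=L_{\tau_{lie}}L_g$ together with the equivariance of $\Phi_N$ and the isometry of $L_g$ to identify the quantity to be bounded with $\|\Phi_N(L_{\tau_{lie}}(L_gx))-\Phi_N(L_gx)\|$, a pure deformation by the same $\tau_{lie}$ applied to the transformed signal $L_gx$, and then applies \cref{thm:Stability-bound} to $L_gx$, using $\|L_gx\|=\|x\|$; the constants then involve the norms of $\tau_{lie}$ itself. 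To repair your argument, pull $L_g$ out on the left so it cancels by isometry, rather than absorbing it into the deformation.
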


\begin{proof}
   One can write $A$ as an integral operator as  
   \[ Ax(v) = \int_G x(g^{-1}\cdot v)k(u,v)d\mu_{lie}(v)  \]
   where $K(u,v) = \delta_u(v) =1$, $\delta_u$ is a Dirac delta operator. Then $\int |k(u,v)|d\mu_{lie}(v) = \int |k(u,v)|d\mu_{lie}(u) =1$ implies $||A|| \leq 1$, followed by Schur's test. As $L_g$ is a continous operator between two normed spaces, it is bounded and hence $||L_gA|| \leq ||A|| \leq 1$.

   From the construction, $L_g$ and $L_{\tau_{lie}}$ commute and hence using the fact that $\Phi_N$ is equivariant to the action of $G$ we get the first part of the inequality, whereas the second part of the inequality follows from \cref{lemma:Stability-of-ckns} and \cref{thm:Stability-bound}. 
\end{proof}

We note that similar results are stated in \cite{bietti2019group} for 2D roto-translation groups where global rotation invariance is attained through stating a global pooling layer. \cref{lie-transformation} also shows how much equivariant operator is affected by diffeomorphism operator which would establish the measure of equivariance \cite{gruver2022lie} of equivariant networks under adversarial training. Generalization of equiv-CNNs beyond known symmetries have been studied in \cite{finzi2020generalizing} and we hope further detailed analysis would complement the construction of convolutional representations equivariant with respect to any Lie group transformation, discussed in that work.

\subsection{Some empirical studies with the stability analysis of equiv-CKNs.}

In this section we do some empirical analysis on the stability bounds of equiv-CKNs stated above, with aims to 1) understand the role of bandwidth of pooling filters, patch size $\kappa$, choice of kernels, scale of deformation, and 2) compare the results with classical translation only equivariant CKNs on some benchmark equivariant datasets.

\textbf{Experimental setups.} We select $SE(2)=\mathbb{R}^2 \rtimes SO(2)$ and $SO(3)$ as our groups for construction of group equiv-CKNs. For $SE(2)$ and $SO(3)$ we respectively pick rotated MNIST described in \cite{weiler2018learning} and rotated MNIST on $S^2$ with stereographic projection described in \cite{cohen2018spherical} as our datasets. For simplicity in full kernel computation we select $N=2$.

In order to implement on grid space we need discretization of our equiv-CKNs and training with manifold optimization. For the latter, viz., training\footnote{The objective is similar to the structural risk minimization.} of equivariant CKNs, we use adaptive stochastic gradient descent on manifold \cite{AbsMahSep2008} by projecting kernel representations on $S^2$. We ask the readers to read from Mairal's work \cite{mairal2016end} which we simply follow for computation of our baseline CKNs. Some useful information are also made available in \cref{Grp-equiv-ckns-constructions}.

We parametrize the deformation map $\tau$ with a scale $\alpha$, as done in \cite{bietti2019group}, defined as $L_{\alpha\tau}x(u) = x(u-\alpha\tau(u)) \approx x(u)-\alpha\tau(u)\nabla x(u)$. Here $\alpha$ controls the amount of deformation. We pick a reference image from the dataset and then using 5 different values of $\alpha$, deform it into another 5 images. From rotated MNIST we pick 4 randomly picked reference images from each image class and then using 5 different $\alpha$'s to transform into 5 deformed images. Together we have 40 reference images and 200 generated deformed images. We then compute the `mean relative distance' in the representation space between a reference image and i) all 20 generated deformed images from the same class, ii) 50 generated deformed images combining different classes randomly picked from the class of 200 images. We then average our result for all 40 reference images.

Given a model $M$ and a set of images $S$, mean relative distance between an image $x$ and $S$ is given by,
\begin{equation}
    \frac{1}{|S|}\sum_{x' \in S} \frac{||\Phi_M(x') - \Phi_M(x)||}{||\Phi_M(x)||}
\end{equation}

\begin{figure*}
\vskip 0.2in
\begin{center}
\includegraphics[width=.3\textwidth]{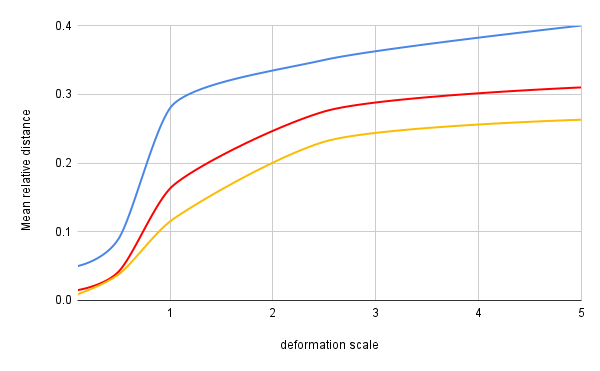}\hfill
\includegraphics[width=.3\textwidth]{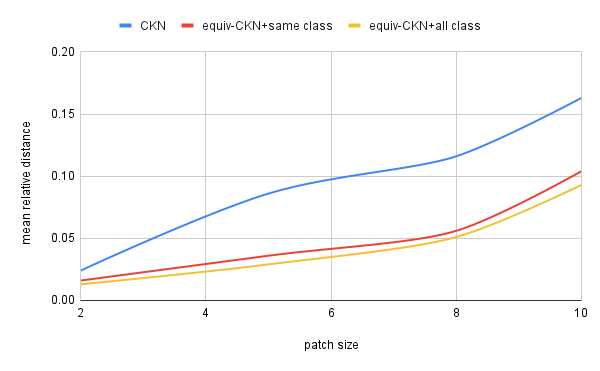}\hfill
\includegraphics[width=.3\textwidth]{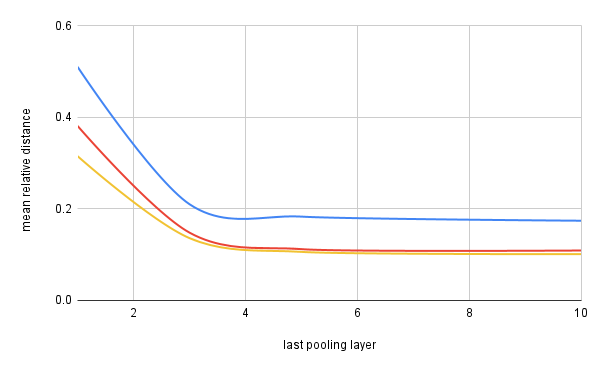}
\\[\smallskipamount]
\includegraphics[width=.30\textwidth]{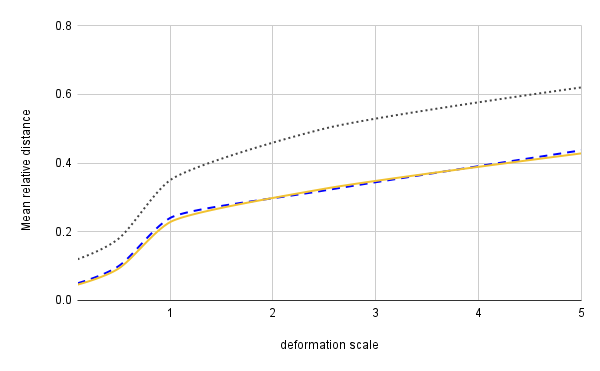}\hfill
\includegraphics[width=.30\textwidth]{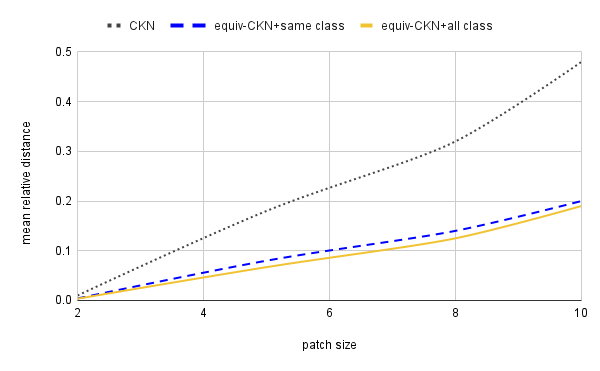}\hfill
\includegraphics[width=.30\textwidth]{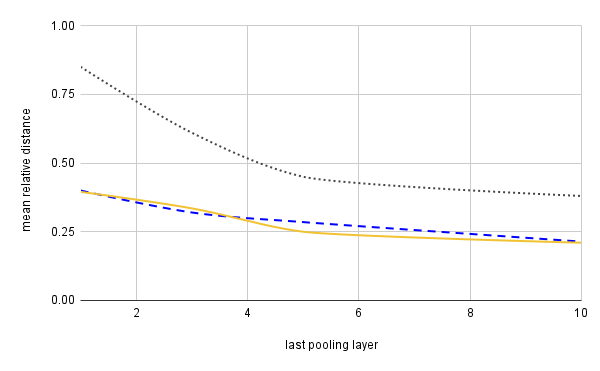}
\caption{Stability analysis with equiv-CKNs and comparing with CKNs. The first row represent experiments with rotated MNIST with $G=SE(2)$, whereas the second column is experiments on rotated MNIST on sphere $S^2$ with $G=SO(3)$. We evaluate mean average distances while varying deformation scale $\alpha =\{ 0.1,0.5,1,2.5,5\}$, patch size $\kappa= \{ 2,5,8,10\}$ and scale of last pooling layer $h_k$, $\sigma_k=\{ 1,3,5,10\}$. For experiments with patch size and last pooling layer parameter $\sigma$, we keep $\alpha=1$ and choose RBF kernel with bandwidth $\{ 5,10\}$ first column, and exponential kernel, $k_{exp}(\langle x,x'\rangle) = exp(\langle x,x'\rangle -1)$ for our kernel mapping for the second column.}
\label{ckn-results}
\end{center}
\vskip -0.2in
\end{figure*}

In \cref{ckn-results} we note that group equiv-CKNs outperform the classical CKNs in deformation stability analysis in terms of the computed `mean relative distance'. For equiv-CKNs trained with same label (i) and with all labels (ii), the performance of training with all labels are slightly better in case of rotated MNIST ($G=SE(2)$) and almost same performance in case of spherical CKNs. Classical CKNs performance got worse on spherical MNIST.

Regarding the choice of kernels, we note that performance largly depends upon how efficiently we can compute the full kernel representations. RBF, exponential, arc-cosine with degree 1, and polynomial kernel with degree 2,3 have relatively same performance and have not much effect on stability analysis. Computational time of full kernel matrix grows rapidly $O(N^2)$ with the increase in number of layers $N$. There are methods (for e.g., \cite{rahimi2007random}) to efficiently compute large scale kernel matrices, however discussion on efficient computation of equiv-CKNs is currently out of scope for this paper.

\section{Equivariant Convolutional Networks in RKHSs}
\label{gen_equiv-CKNs}

In this section we give an outline on how to construct an equivariant G-CNN $f$ \cite{cohen2016group} recursively from intermediate functions $\hat{f}_k^i$ that lie in the RKHSs $\mathcal{H}_k$ which is of the form,
\begin{equation}
\label{equiv-cnns-rkhs}
    \hat{f}_k^i (x) = ||x||\sigma(\langle w_k^i, x\rangle /||x||),
\end{equation}
primarily used to study embedding of CNNs\footnote{CNNs with homogeneous activation function $\sigma$'s are considered. For e.g., smoothed-ReLU function.} in RKHSs and thus extending theoretical results of CKNs to CNNs. Here $w_k^i$'s are convolutional filters used to obtain intermediate feature maps $\hat{f}_k^i$'s followed by non-linear activation maps ($\sigma$'s) and linear pooling, similarly as defined in \cref{grp equiv-CKNs}. We would like to point out how one can embed an equiv-CNNs in RKHS and thus enjoying the analysis of CKNs.

\subsection{Construction of group equiv-CNN $f$ in the RKHS} 
\label{construction-formalities}

One defines the $k$-th layer of equiv-CNN function $f$ in $\mathcal{H}_k$ from the $(k-1)$-th layer as follows: For an input signal $x_0 \in L^2(G,\mathcal{H}_0 \coloneqq \mathbb{R}^{p_0})$, we build a sequence of feature maps, $x_k \in L^2(G,\mathcal{H}_k \coloneqq \mathbb{R}^{p_k})$ with $p_k$ channels. We use the following intermediate functions $g_k^i \in \mathcal{P}_k$ and $f_k^i \in \mathcal{H}_k$, where $i=1,...,p_k$ and construct it from the $(k-1)$-th intermediate function inductively, where the intermediate functions are of form \cref{intermediate}.
\begin{align*}
    g_k^i(u) & = \sum_{h \in S_k}\sum_{j=1}^{p_{k-1}} w_k^{ij}(u^{-1}h)f_{k-1}^j (x(h)) \\
    f_k^i(x(u)) & = ||x(u)|| \sigma\left( \langle g_k^i, x(u) \rangle / ||x(u)|| \right),
\end{align*}
for $x(u) \in \mathcal{P}_k \backslash \{0\}$, $u \in G$, and the filters $w_k^i(u) = (w_k^{ij}(u))_{j=1,...p_{k-1}}$ are equivariant through the definition of the intermediates and also matches the notion of group equivariant correlation of \cite{cohen2016group}.

With this construction one can show that the equivariant feature maps $x_k$ are given are $x_k^i(u) = \langle f_k^i, M_kP_kx_{k-1}(u) \rangle$, where $u \in G$ and $P_k$ and $M_k$'s are our patch and kernel operators, respectively, used to define an equiv-CKN. With a final linear prediction layer one can immediately show that an equivariant CNN lies in a RKHS, supported by \cref{linear-function}. We will work on the detailed construction in our follow-up paper, discussing in depth the generalization bounds and sample complexity of equiv-CKNs.

Following proposition 13 and proposition 14 of \cite{bietti2019group} one get upper bounds on the RKHS norm of classical CNNs $f_\sigma$ which is given by the parameter of the final linear fully connected layer, the spectral norm of the convolutional filter parameters at each layers and the choice of the activation function. One can think of similar bounds for equiv-CNNs through it's RKHSs norm given by the final pooling layer (or the norm of the global pooling operator $A_c: L^2(G) \to L^2(\mathbb{R})$ defined for $x \in L^2(G)$ as $A_c x(u) = \int_G x(g^{-1}u)d\mu_c(g)$), equivariant filters and the choice of non-linearities. One can use spectral norms to study generalization, for e.g., done in \cite{bartlett2017spectrally}, of equiv-CNNs. We give an intuitive analysis of generalization bounds through Rademacher complexity of the equiv-CNNs (CKNs) function classes, in \cref{generalization-Rademacher}. Similarly as one can do stability analysis of CNNs through the Lipschitz smoothness and given by the relation through Cauchy-Schwarz's inequality, 
\begin{equation}
\label{lipschitz-smoothness}
    |f_\sigma(L_\tau x) - f_\sigma(x)| \leq ||f_\sigma||_{\mathcal{H}_N}||\Phi_N(L_\tau x) - \Phi_N (x)||_{L^2(G)},
\end{equation}
where $||\cdot||_\mathcal{H}$ is the standard Hilbert norm, one can then extend the same for equiv-CNNs, outlined in \cref{equiv-cnns-rkhs}, and supported by \cref{thm:Stability-bound}. In \cite{cisse2017parseval} it is shown that robustness to adversarial examples of deep models can be achieved by bounding the Lipschitz smoothness. Invariant and Equivariant CKNs have already possessed the Lipschitz stability property and hence the above equation can be useful to construct adversarially robust equivariant convolutional representations.

\section{Conclusion and Future Work Directions}

We have shown how to construct a hierarchical kernel network for multilayered equivariant representation learning by constructing the equivariant feature maps in RKHSs. Then we studied the stability bounds of equiv-CKNs under some mild assumptions and through the Lipschitz stability which shows the stability with respect to a deformation depends upon the specific architecture of equiv-CKNs including the depth of the network and most importantly of the RKHS norm, which acts as an implicit regularizer in our model and controlling the norm leads to better stable model, as shown in \cite{bietti2019kernel}. Finally we outlined the possibility of embedding a group equiv-CNN into a RKHS and thus extending the studies of equivariant convolutional networks through the lens of equiv-CKNs that might provide novel insights on equivariant convolutions as well as on deep multilayered equivariant kernel networks, for e.g., shown in context of classical CNNs in \cite{anselmi2015deep}.

Despite we follow the common framework of \cite{kondor2018generalization} and expect such equiv-CKNs can also be defined on spherical domain \cite{cohen2018spherical} it may not be possible to define the same framework on a general manifold. One needs careful construction of gauge equivariant CKNs, following similar works on gauge equiv-CNNs \cite{cohen2019gauge,de2020gauge} which might be possible as anisotropic kernel (e.g., indefinite kernels, asymmetric kernels) representations can be modelled through reproducing kernel Banach space (RKBS) or Krein space(RKKS) etc., to name of few. This is a future work we are interested to work on. 

We are also interested to do a thorough analysis of generalization capability of equivariant networks under adversarial training through analysing the generalization bounds of equiv-CKNs. A PAC-Bayesian generalization analysis has been performed recently on equivariant networks \cite{behboodi2022pac}, whereas \cite{bietti2022approximation} has studied generalization of 2-layers CKNs by bounding the excessive risk for the kernel ridge regression (KRR) estimator. Analyzing the generalization bounds of the equiv-CKNs with these approaches is indeed a promising direction of research.

\subsection*{Acknowledgement}

The author would like to thank anonymous reviewers for passing valuable feedback and comments on the paper which helps in improving the structure of the paper and clarifications as well as shedding light on potential future directions. Part of the work was supported by Johan Suykens's ERC advanced grant E-DUALITY (grant number 787960). 

\nocite{langley00}

\bibliography{example_paper}
\bibliographystyle{icml2024}

\newpage
\appendix
\onecolumn
\section{Some Useful Mathematical Tools}

We state the classical result of characterizing a Reproducing Kernel Hilbert Space (RKHS) of functions defined from Hilbert space mappings.

\begin{theorem}
\label{RKHS-main-theorem}
    Let $\phi: \mathcal{X} \to H$ be a feature map to a Hilbert space $H$, and let $K(x,x') \coloneqq \langle \phi(x),\phi(x') \rangle_H$ for $x,x' \in \mathcal{X}$. Let $\mathcal{H}$ be the linear subspace defined by $\mathcal{H} \coloneqq \{ f_w, w \in H \}$ such that $f_w : x \mapsto \langle w,\phi(x) \rangle_H$, and we consider the norm $||f_w||^2_\mathcal{H} \coloneqq inf_{w' \in H} \{ ||w'||^2_H$ such that $f_w = f_{w'} \}$. Then $\mathcal{H}$ is the RKHS associated to the kernel $K$.
\end{theorem}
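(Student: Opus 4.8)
The plan is to verify the two defining properties of an RKHS: that $\mathcal{H}$ equipped with the stated norm is a genuine Hilbert space of functions on $\mathcal{X}$, and that $K$ is its reproducing kernel. The only real subtlety is that the assignment $w \mapsto f_w$ need not be injective, so I must first make sense of the infimum in the definition of $\|\cdot\|_\mathcal{H}$ and show that it yields a complete inner-product space.

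First I would analyse the linear map $T : H \to \mathbb{R}^{\mathcal{X}}$, $T(w) = f_w$. Its kernel is $N := \{w \in H : \langle w, \phi(x)\rangle_H = 0 \text{ for all } x \in \mathcal{X}\}$, which is exactly the orthogonal complement of $V := \overline{\mathrm{span}}\{\phi(x) : x \in \mathcal{X}\}$. Writing the orthogonal decomposition $H = V \oplus N$, any two preimages $w, w'$ with $f_w = f_{w'}$ differ by an element of $N$, so the infimum $\inf\{\|w'\|_H^2 : f_{w'} = f_w\}$ is attained by the unique representative $\bar w \in V$, the orthogonal projection of $w$ onto $V$. Hence $T$ restricts to a \emph{bijection} $V \to \mathcal{H}$, and by construction $\|f_w\|_\mathcal{H} = \|\bar w\|_H$, making $T|_V$ an isometry. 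Since $V$ is a closed subspace of the Hilbert space $H$ it is complete, so $\mathcal{H}$ is a Hilbert space, carrying the inner product $\langle f_w, f_{w'}\rangle_\mathcal{H} = \langle \bar w, \bar{w'}\rangle_H$ transported from $V$.

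Next I would establish the reproducing property. For fixed $x$, observe that $K(\cdot, x) = f_{\phi(x)}$, since $f_{\phi(x)}(x') = \langle \phi(x), \phi(x')\rangle_H = K(x, x')$; in particular $K(\cdot, x) \in \mathcal{H}$. Moreover $\phi(x) \in V$, so its minimal-norm representative is itself. Then for any $f_w \in \mathcal{H}$, using that $w - \bar w \in N = V^\perp$ while $\phi(x) \in V$,
\[\langle f_w, K(\cdot, x)\rangle_\mathcal{H} = \langle \bar w, \phi(x)\rangle_H = \langle w, \phi(x)\rangle_H = f_w(x),\]
which is precisely the reproducing identity. This simultaneously shows that the evaluation functional at $x$ is bounded, with $\|K(\cdot,x)\|_\mathcal{H} = K(x,x)^{1/2}$, so $\mathcal{H}$ is indeed a reproducing kernel Hilbert space with kernel $K$.

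Finally I would invoke uniqueness. The kernel $K$ is positive definite, being a Gram kernel of inner products, and the Moore--Aronszajn theorem guarantees that a positive-definite kernel determines its RKHS uniquely; since $\mathcal{H}$ has been exhibited as a Hilbert space of functions admitting $K$ as reproducing kernel, it must coincide with the RKHS of $K$. The step I expect to be the main obstacle is the first one: handling the non-injectivity of $w \mapsto f_w$ cleanly, identifying $\mathcal{H}$ isometrically with the closed subspace $V = N^\perp$, and thereby inheriting both completeness and the inner product from $H$. Once that identification is in place, the reproducing property reduces to the short orthogonality computation above.
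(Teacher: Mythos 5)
Your proof is correct. The paper itself offers no proof of this statement --- it is quoted in the appendix as a classical result (it is essentially Theorem 4.21 of Steinwart and Christmann, and appears in the same form in the appendix of \cite{bietti2019group}) --- so there is nothing to compare against, but your argument is exactly the standard one: decompose $H = V \oplus V^\perp$ with $V = \overline{\mathrm{span}}\{\phi(x)\}$, observe that the infimum defining $\|f_w\|_{\mathcal{H}}$ is attained at the projection $\bar w$ of $w$ onto $V$ so that $w \mapsto f_w$ restricts to an isometric bijection $V \to \mathcal{H}$ (whence completeness and the inner product), and then verify the reproducing identity $\langle f_w, K(\cdot,x)\rangle_{\mathcal{H}} = \langle \bar w, \phi(x)\rangle_H = f_w(x)$ using $w - \bar w \perp \phi(x)$. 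The appeal to Moore--Aronszajn at the end to conclude that $\mathcal{H}$ is \emph{the} RKHS of $K$ (rather than merely \emph{an} RKHS with kernel $K$) is the right way to close the uniqueness gap, and you correctly identified the non-injectivity of $w \mapsto f_w$ as the only real subtlety.
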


We now state another classical result, from harmonic analysis that is used to prove stability results of equiv-CKNs.

\begin{lemma}[Schur's test]
   Let $\mathcal{H}$ be a Hilbert space and $\Omega$ a subset of $\mathbb{R}^d$. Consider $T$ an integral operator with kernel\footnote{This type of kernel is known as Schwartz kernel.} $k: \Omega \times \Omega \to \mathbb{R}$ such that for all $u \in \Omega$ and $x \in L^2(\Omega,\mathcal{H})$,
   \begin{equation}
       Tx(u) = \int_\Omega k(u,v)x(v)dv.
   \end{equation}
   If $\int |K(u,v)|dv \leq C$ and $\int |K(u,v)|du \leq C$ for all $u \in \Omega$ and $v \in \Omega$ respectively, for some constant $C$, then for all $x \in L^2(\Omega,\mathcal{H})$, we have $Tx \in L^2(\Omega,\mathcal{H})$ and $||T|| \leq C$. 
\end{lemma}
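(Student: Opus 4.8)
The plan is to prove directly the norm bound $\|Tx\|_{L^2(\Omega,\mathcal{H})} \le C\,\|x\|_{L^2(\Omega,\mathcal{H})}$, since this single estimate simultaneously shows that $Tx \in L^2(\Omega,\mathcal{H})$ (finiteness) and that $\|T\| \le C$ (the operator-norm bound). First I would pass from the (Bochner) integral defining $Tx(u)$ to a scalar majorant by the triangle inequality for Bochner integrals, namely $\|Tx(u)\|_{\mathcal{H}} \le \int_\Omega |k(u,v)|\,\|x(v)\|_{\mathcal{H}}\,dv$. This reduces the vector-valued statement to an inequality between non-negative scalar functions, after which the Hilbert-space structure of $\mathcal{H}$ plays no further role.

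The crux is an \emph{asymmetric} application of Cauchy--Schwarz. I would factor the kernel as $|k(u,v)| = |k(u,v)|^{1/2}\cdot|k(u,v)|^{1/2}$ and bound
\[
\int_\Omega |k(u,v)|^{1/2}\bigl(|k(u,v)|^{1/2}\|x(v)\|_{\mathcal{H}}\bigr)\,dv \le \left(\int_\Omega |k(u,v)|\,dv\right)^{1/2}\left(\int_\Omega |k(u,v)|\,\|x(v)\|_{\mathcal{H}}^2\,dv\right)^{1/2}.
\]
The first factor is at most $C^{1/2}$ by the row hypothesis $\int_\Omega |k(u,v)|\,dv \le C$, so squaring yields the pointwise estimate $\|Tx(u)\|_{\mathcal{H}}^2 \le C\int_\Omega |k(u,v)|\,\|x(v)\|_{\mathcal{H}}^2\,dv$.

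Finally I would integrate this over $u \in \Omega$ and exchange the order of integration by Tonelli's theorem, which applies because the integrand $|k(u,v)|\,\|x(v)\|_{\mathcal{H}}^2$ is non-negative and measurable. This gives
\[
\|Tx\|_{L^2}^2 \le C\int_\Omega \|x(v)\|_{\mathcal{H}}^2\left(\int_\Omega |k(u,v)|\,du\right)dv \le C^2\int_\Omega \|x(v)\|_{\mathcal{H}}^2\,dv,
\]
where the inner integral is controlled by the column hypothesis $\int_\Omega |k(u,v)|\,du \le C$. Taking square roots delivers $\|Tx\|_{L^2} \le C\,\|x\|_{L^2}$, proving both assertions. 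The main obstacle is recognising the asymmetric split $|k|^{1/2}\cdot|k|^{1/2}$ — a naive symmetric use of Cauchy--Schwarz does not close the estimate — together with the bookkeeping that routes the two hypotheses into the two distinct integrations (one inside the pointwise Cauchy--Schwarz step, the other after the Tonelli exchange). The measurability and finiteness needed to license the Bochner triangle inequality and Tonelli's theorem are routine once one observes that all scalar integrands involved are non-negative.
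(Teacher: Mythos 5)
Your proof is correct and complete: the Bochner triangle inequality, the asymmetric split $|k(u,v)| = |k(u,v)|^{1/2}\cdot|k(u,v)|^{1/2}$ followed by Cauchy--Schwarz in $v$, and the Tonelli exchange that routes the row hypothesis into the pointwise estimate and the column hypothesis into the final integration together constitute the standard proof of Schur's test, and your pointwise bound $\|Tx(u)\|_{\mathcal{H}}^2 \le C\int_\Omega |k(u,v)|\,\|x(v)\|_{\mathcal{H}}^2\,dv$ also shows the defining Bochner integral converges absolutely for almost every $u$. Note that the paper itself states this lemma without proof, presenting it as a classical result from harmonic analysis, so there is no in-paper argument to compare against; yours is precisely the textbook argument the paper implicitly invokes, and you correctly read through the statement's notational slip of writing $K(u,v)$ for the kernel named $k$.
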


For an operator $T: L^2(\mathbb{R}^d,\mathcal{H}) \to L^2(\mathbb{R}^d,\mathcal{H'})$, the norm is defined as $||T|| \coloneqq sup_{||x||_{L^2(\mathbb{R}^d,\mathcal{H})} \leq 1} ||Tx||_{L^2(\mathbb{R}^d,\mathcal{H}')}$. One can extend this definition of operator norm on $L^2(G)$, as the latter is the base of our signals defined on the group $G$, rather than on $\mathbb{R}^d$. With the support of Haar measure on locally compact group $G$ which supports the signal domain the structure of norm is similar to that of on $L^2(\mathbb{R}^d)$ (with a Lebesgue measure support).

\section{Further Details on Group Equivariant CKNs on Euclidean Domain}
\label{Grp-equiv-ckns-constructions}

\textbf{Patch extraction operator} $P_k$'s, given by \cref{patch-equation} which is encoded in a Hilbert space $\mathcal{P}_k$, preserves the norm, i.e., $||P_kx_{k-1}|| = || x_{k-1}||$, because of $\mathcal{P}_k$'s are supported by normalized Haar measure. Hence $P_kx_{k-1} \in L^2(G,\mathcal{P}_k)$.

\textbf{Kernel mapping operator} $M_k$'s. Here we give a detailed description of the operator defined in \cref{kernel-equation} and the choice of dot-product kernels. As defining a homogeneous dot-product kernel yields $M_k$'s as point-wise operator and hence commutes well with the group action $L_g$'s for $g \in G$, we stick to the definition of kernel mapping operators given by \cite{bietti2019group}.

\begin{lemma}[Lemma 1, \cite{bietti2019group}]
   Let $K_k$ be a positive-definite kernel given by \cref{positive-definite kernel} which satisfies the constraints given by $k_k$'s. Then the RKHS mapping $\varphi_k : \mathcal{P}_k \to \mathcal{H}_k$, for all $x,x' \in \mathcal{P}_k$ satisfies $||\varphi_k(x) - \varphi_k(x')|| \leq ||x-x'||$. Moreover $K_k(x,x') \geq \langle x,x' \rangle$, i.e., the kernel $K_k$'s are lower bounded by the linear kernels.
\end{lemma}
\begin{proof}
For the proof we make use of the fact from the Maclaurin expansion\footnote{We also assume that the series $\sum_ib_i$ and $\sum_i ib_i$'s are convergent.} of $k_k$'s that
\begin{equation}
    k_k(u) = k_k(1) - \int_u^1 k_k'(t)dt \geq k_k(1) - k_k'(1)(1-u),
\end{equation}
for all $u \in [-1,+1]$. Then for $x,x' \neq 0$ we have
\begin{equation*}
    ||\varphi_k(x) - \varphi_k(x')||^2 = ||x||^2 + ||x'||^2 - 2||x||||x'||k_k(u),
\end{equation*}
with $u = \langle x,x' \rangle / (||x||||x'||)$. Using the above inequality and the constraint $k_k(1)=1$ we have
\begin{equation*}
\begin{split}
    ||\varphi_k(x) - \varphi_k(x')||^2 & \leq ||x||^2 + ||x'||^2 - 2||x|||x'||(1-k_k'(1)+k_k'(1)u) \\
                                    & = (1-k_k'(1))(||x||^2+||x'||^2-2||x|||x'||) \\
                                    & + k_k'(1)(||x||^2+||x'||^2 - 2\langle x,x' \rangle) \\
                                    & = (1-k_k'(1))|||x|| - ||x'|||^2 + k_k'(1)||x-x'||^2 \\
                                    & \leq ||x-x'||^2.
\end{split}
\end{equation*}
For the last inequality we use the fact that $0 \leq k_k'(1) \leq 1$.
\end{proof}
\begin{remark}
One can extend the above lemma for any Lipschitz continuous mapping with $\varphi_k(\cdot)$ being $\rho$-Lipschitz with $\rho = max(1,\sqrt{k_k'(1)})$, for any value of $k_k'(1)$. Then similarly the above inequality will hold and more generally we'll also have $||\varphi_k(x) - \varphi_k(x')||^2 \leq k_k'(1)||x-x'||^2$ when $k_k'(1) \geq 1$. This together with the above inequality gives us $||\varphi_k(x) - \varphi_k(x')||^2 \leq \rho^2||x-x'||^2$, and yields the result. However for the sake of simplicity we just avoid using Lipschitz continuous kernel mapping as otherwise the stability constants would also depend upon $\rho$ which would increase exponentially with the number of layers that one wants to avoid.

For example, homogeneous Gaussian kernel defined as, $K_{RBF}(x,x') = exp(-\alpha||x-x'||^2)$ is non-expansive only when $\alpha \leq 1$ but is still Lipschitz for any values of $\alpha$.
\end{remark}

\textbf{Pooling operator} $A_k$'s. In the definition of $A_k$ in \cref{pooling-int} the pooling filter $h_k$ is typically localized around the identity element of $G$. By applying Schur's test on the operator $A_k$ one obtains that $||A_k|| \leq 1$ and hence $x_k(u) \in L^2(G,\mathcal{H}_k)$.

\begin{remark}
Unlike the operators $P_k$ and $M_k$, $A_k$ doesn't preserve the norm (which is in contrary to the setting of \cite{mallat2012group}) as $||A_kx_k(u)|| \leq ||x_k(u)||$. As we are using a pooling filter with a scale of $\sigma_k$, therefore $A_k$'s may reduce frequencies of signals that are larger than $1/\sigma_k$. However norm preservation is less relevant in the kernel based setting as discussed in \cite{bietti2019group}, as if one picks a Gaussian kernel mapping on top of the last feature map instead of a linear layer as prediction layer then the final feature representation preserves stability as well as have a unit norm.
\end{remark}

\begin{remark}
    One can also pool on subset $H \subseteq G$ by only integrating on $H$, much like the subgroup pooling described in \cite{cohen2016group} for group equiv-CNNs. This subsampling on a subgroup $H \subseteq G$, though gives the subsampled feature map $H$-equivariant but one can obtain the full group $G$-equivariance by performing the pooling on the entire $H$. Moreover from the first expression of $A_k$ in \cref{pooling-int} it is easy to see that the pooling operator commutes with $L_g$.
\end{remark}

\textbf{Some notes on discretization and kernel approximation.} Though for our theoretical analysis purposes we have defined signals on $L^2(G,\mathcal{H}_k)$ but for practical implementation one needs to discretize the signals as in practice, signals are discrete. For group equiv-CNNs it is nicely discussed in \cite{cohen2016steerable,cohen2019general} through the notion of fiber space (bundles), making each discrete feature maps equivariant and hence the entire network equivariant, through the efficient implementation of G-equivariant layers. For our construction it is possible to sample each feature map $\Phi_k(x) \coloneqq x_k(u)$ on a discrete set with no loss of information. For the classical CKNs an in-depth discussion on discretization is available through section 2.1 of \cite{bietti2019group} or by simply following the construction of hierarchical CKN layers from \cite{mairal2016end}.

In \cite{mairal2016end} a finite dimensional subspace projection of RKHS mappings $\varphi_k(\cdot)$ are discussed through an adapted Nystr\"{o}m method \cite{zhang2008improved} which is essential in the construction of CKNs. However this is not a drawback as such finite dimensional approximation of RKHS mappings still live in the corresponding RKHSs as well as it won't hurt the stability results due to the non-expansiveness of the projection. However in this case some signal information is lost as through projection we can no longer maintain the norm preservence of the kernel mapping operator $M_k$.

\textbf{Equivariant convolutional kernel representations} 

\begin{corollary}[Equivariant kernels]
    \cref{pooling-int} can always be written as cross-correlation between the feature map and the pooling filter. Moreover in equiv-CKNs, representation, $\Phi_N(x) \in L^2(G,\mathcal{H}_N)$ is equivariant (with respect to $G$) if and only if each $\varphi_k$'s are in cross-correlation with an equivariant pooling filter.
\end{corollary}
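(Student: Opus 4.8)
The plan is to establish the two assertions of the corollary in turn, leaning heavily on \cref{thm: equiv-ckns}. For the first assertion, I would begin from the second expression of the pooling operator in \cref{pooling-int}, namely $A_k x_k(u) = \int_G x_k(v) h_k(u^{-1}v)\, d\mu(v)$, and observe that this is by definition the cross-correlation of the feature map $x_k$ with the filter $h_k$. To confirm that this is merely a rewriting of the first (convolution-like) expression, I would substitute $w = uv$ in $\int_G x_k(uv) h_k(v)\, d\mu(v)$; left-invariance of the Haar measure $\mu$ gives $d\mu(u^{-1}w) = d\mu(w)$, returning the cross-correlation form. This step is a routine change of variables and carries no difficulty.

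For the second assertion I would prove the two implications separately. The direction ``cross-correlation with an equivariant filter implies equivariance'' is the easy one and proceeds by induction on the layer index. Since $P_k$ and $M_k$ commute with $L_g$ for every $g \in G$ --- the former by the computation in \cref{grp equiv-CKNs} and the latter because $M_k$ acts pointwise through $\varphi_k$ --- it suffices to check that $A_k$ commutes with $L_g$ whenever it is realized as a cross-correlation against an equivariant filter $h_k$. A direct calculation, identical in spirit to the forward argument in the proof of \cref{thm: equiv-ckns}, confirms this; composing the equivariant operators $A_k M_k P_k$ across all layers then yields $L_g \Phi_N x = \Phi_N L_g x$.

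The converse direction --- assuming $\Phi_N(x)$ is $G$-equivariant and deducing that each pooling layer necessarily takes the cross-correlation form against an equivariant filter --- is where I expect the real work to lie. The feature maps $x_k$ are supported on the homogeneous spaces $\chi_k = G/S_k$, and the patch and kernel-mapping operators $P_k, M_k$ are fixed equivariant maps carrying no free linear structure; the only layer through which the equivariance constraint is mediated is the pooling $A_k$. I would therefore invoke the ``convolution is all you need'' correspondence (theorem 3.1 of \cite{cohen2019general}; see also section 4 of \cite{kondor2018generalization}), which states that every equivariant linear map between feature spaces over a homogeneous space is given by a cross-correlation against an equivariant kernel. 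Applied layerwise, this correspondence forces each $A_k$ into the required form.

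The main obstacle is making this layerwise isolation rigorous: the hypothesis is equivariance of the \emph{composite} $\Phi_N$, whereas the conclusion is a structural constraint on each \emph{individual} $\varphi_k$ and its associated filter $h_k$. Peeling the constraint off one layer at a time requires the representation-theoretic machinery underlying \cref{thm: equiv-ckns}, together with care that the fixed operators $P_k$ and $M_k$ neither absorb nor mask the equivariance obstruction --- so that the entire burden of (non-)equivariance rests on the pooling filters, exactly as the correspondence demands.
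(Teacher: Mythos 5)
Your proposal is correct and follows essentially the same route as the paper: the first assertion is read off directly from the definition of cross-correlation applied to the second form of \cref{pooling-int}, and the second assertion is reduced to the equivalence already established in \cref{thm: equiv-ckns} (whose converse direction the paper likewise defers to the representation-theoretic ``convolution is all you need'' arguments of \cite{kondor2018generalization} and \cite{cohen2019general}). The paper's own proof is terser --- it simply writes $A_k x_k = A_k M_k P_k x_{k-1}$ and cites \cref{thm: equiv-ckns} --- but your more explicit treatment, including your honest flagging that the layerwise isolation in the converse is where the real work sits, matches where the paper itself places that burden.
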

\vspace{-2mm}
\begin{proof}
    The proof is straight-forward and immediately follows from the definition of cross-correlation, i.e., $[h_k \ast x_k](u) \coloneqq \int_G h_k(u^{-1}v)x_k(v)d\mu_k(v) = A_kx_k(u)$. For the second part, note that $A_kx_k(u)$ can be written as $A_kM_kP_kx_{k-1}(u)$ as one can see it from \cref{ckn-schematic}. Then establishing link with kernel mapping $\varphi_k$ with $h_k$'s are staightforward and the equivariance followed from \cref{thm: equiv-ckns}.
\end{proof}

\begin{remark}
    Note that through the above corollary we get another equivalent notion of equivariant kernels, as described in (section 3.1 of \cite{cohen2019general}). However note that in equiv-CKNs the kernels are described by kernel mapping $\varphi_k$'s which is given by the RKHS mapping, giving true flavour of kernel machine, which is missing in group equiv-CNNs. We note that more recently \cite{lang2021a} gives a full characterization of group equivariant kernels but it still misses the notion of RKHSs.
\end{remark}

\section{Stability Analysis of Equivariant Convolutional Kernel Representations}
\label{Stability-proofs}

Before giving the proofs of \cref{lemma:Stability-of-ckns} and \cref{thm:Stability-bound} we first dive deep into the stability form and how it is controlled by the operator norm (and hence of the RKHSs norm) which are motivated by similar notion of diffeomorphism studied in \cite{mallat2012group}. 

The assumption $sup_{c \in \hat{S}_k} |c| \leq \kappa\sigma_{k-1}$ is made to relate the scale of pooling operator at layer $k-1$ with the diameter of the patch $S_k$. As $\sigma_k$'s increases exponentially with the layers $k$ and characterizes resolution of each feature map, the assumption helps us to consider such patch sizes that are adapted to those resolutions, and helps us control the stability. Let us first state the bound on operator norms.

\begin{proposition}[Proposition 4 \cite{bietti2019group}]
\label{bounds-operator-norm}
    For any $x \in L^2(\mathbb{R}^d,\mathcal{H}_0)$, we have
    \begin{equation}
    \begin{split}
        ||\Phi_N(L_\tau x) - \Phi_N(x)|| & \leq ( \sum_{k=1}^N ||[P_kA_{k-1},L_\tau]|| \\ 
        & + ||[A_N,L_\tau]|| \\
        & + ||L_\tau A_N - A_N||)\cdot ||x||. 
    \end{split}
    \end{equation}
\end{proposition}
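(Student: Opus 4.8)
The plan is to bound the deformation error by splitting it into a commutator part, which telescopes across the layers, and a residual ``translation error'' coming only from the final pooling. Write $\Phi_N = A_N M_N P_N A_{N-1}\cdots A_1 M_1 P_1$ and regroup the factors as $A_N$, then $M_N$, then the blocks $P_k A_{k-1}$ alternating with the maps $M_{k-1}$, where $A_0$ denotes the initial downsampling operator (taken to be the identity if absent), so that the innermost block is $P_1 A_0$. Before telescoping I would collect the two structural facts that control every factor. First, each $P_k$ preserves the norm and each $A_k$ satisfies $||A_k|| \leq 1$ by Schur's test, while each $M_k$ is non-expansive by the non-expansiveness lemma; hence every block $P_k A_{k-1}$, every $M_k$, and $A_N$ has operator norm at most $1$. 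Second, the kernel map $M_k$, acting pointwise on the feature map, commutes exactly with the deformation operator, i.e. $M_k L_\tau = L_\tau M_k$, so its commutator with $L_\tau$ vanishes.

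Next I would use the decomposition $\Phi_N L_\tau - \Phi_N = [\Phi_N, L_\tau] + (L_\tau - \mathrm{Id})\Phi_N$ and treat the two pieces separately. For the commutator I would apply the Leibniz-type telescoping identity for a product: writing $\Phi_N = T_n \cdots T_1$ for the ordered list of regrouped factors, one has $[\Phi_N, L_\tau] = \sum_i T_n \cdots T_{i+1}\,[T_i, L_\tau]\,T_{i-1}\cdots T_1$. Since all the surrounding factors are non-expansive, each summand is bounded in operator norm by $||[T_i, L_\tau]||$, and since every $M_k$ commutes with $L_\tau$, only the factors $T_i \in \{P_k A_{k-1}\}_{k=1}^N$ and $T_i = A_N$ survive. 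This yields $||[\Phi_N, L_\tau]|| \leq \sum_{k=1}^N ||[P_k A_{k-1}, L_\tau]|| + ||[A_N, L_\tau]||$.

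For the residual term I would factor $\Phi_N = A_N B$ with $B \coloneqq M_N P_N \cdots M_1 P_1$, so that $(L_\tau - \mathrm{Id})\Phi_N = (L_\tau A_N - A_N)B$; because $B$ is a composition of non-expansive operators, $||(L_\tau - \mathrm{Id})\Phi_N|| \leq ||L_\tau A_N - A_N||$. Adding the two bounds and multiplying through by $||x||$ gives the stated inequality. I expect the genuinely delicate points to be purely organizational: verifying the regrouping so that precisely the commutators $[P_k A_{k-1}, L_\tau]$ appear (with $A_0$ correctly absorbed into the $k=1$ term), and confirming the pointwise commutation $M_k L_\tau = L_\tau M_k$, on which the dropping of all $M_k$ terms rests. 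Everything else reduces to the telescoping identity together with the uniform non-expansiveness already established for the CKN building blocks, so the argument parallels the diffeomorphism-stability analysis of \cite{mallat2012group, bietti2019group}.
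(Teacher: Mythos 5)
Your proposal is correct and follows essentially the same route as the paper, whose proof is a sketch listing exactly the ingredients you use (norm preservation of $P_k$ and $M_k$, non-expansiveness of $M_k$, $\|A_k\|\le 1$, and the pointwise commutation $M_kL_\tau=L_\tau M_k$) and defers the telescoping details to Proposition 4 of \cite{bietti2019group}. The one point to phrase carefully is that the exact Leibniz identity for $[\Phi_N,L_\tau]$ presupposes that the outer factors are linear, whereas the $M_k$ are nonlinear; replacing it by the usual chain of intermediate differences yields the same bound, since the outer factors are $1$-Lipschitz and $[M_k,L_\tau]=0$.
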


By expanding $\Phi_N$'s as shown in the multilayered construction of CKNs in \cref{Stability equiv-CKNs} and using the facts of norm preservence of $P_k$ and $M_k$'s, non-expansiveness of $M_k$'s and $||A_k|| \leq 1$ we can get the above result. Moreover one also uses the fact that kernel mapping $M_k$ is defined point-wise and thus commutes with the deformation operator $L_\tau$. The result holds even when $x$ is defined on the locally compact group $G$, i.e., when $x \in L^2(G,\mathcal{H}_0)$.

\section{Geometric Model Complexity of Deep Equivariant Convolutional Representations}
\label{geo-complexity-equiv-cnns}

If one can write a group equiv-CNN $f$ in the form $f(x) = \langle f, \Phi(x) \rangle$, where $\Phi(\cdot)$ is the equivariant convolutional kernel representation, then one can extend the stability analysis of equiv-CKNs, $\Phi(\cdot)$'s to the stability analysis of equiv-CNNs. Moreover computing the RKHS norm of the equiv-CNNs one can also control generalization, so that controlling the RKHS norm serves as the geometric model complexity of equiv-CNNs, where the term `geometric' refers to the equivariance of operators and the geometry of RKHSs.

Before outlining the construction of an equiv-CNNs in RKHSs, let's state a lemma from \cite{bietti2019group} which closely follows the results of \cite{zhang2017convexified}, linking the homogeneous activation function with RKHSs $\mathcal{H}_k$, which we believe also holds for group equiv-CNNs as the pointwise homogeneous activation maps $\sigma$ are replaced with pointwise non-linearity maps $\nu$, as described in \cite{cohen2016group}.

\begin{lemma}[Lemma 11, \cite{bietti2019group}]
    If the activation maps $\sigma$ admits a polynomial expansion and we define our kernel $K_k$ as given in \cref{positive-definite kernel}. Then for $g \in \mathcal{P}_k$, the RKHS $\mathcal{H}_k$ contains the function,
    \begin{equation}
    \label{intermediate}
        f: x \mapsto ||x||\sigma(\langle g,x \rangle / ||x||),
    \end{equation}
    which matches the form given by \cref{equiv-cnns-rkhs}.
\end{lemma}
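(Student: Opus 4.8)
The plan is to reduce everything to the abstract characterization in \cref{RKHS-main-theorem}: once I exhibit an explicit feature map $\varphi_k : \mathcal{P}_k \to H$ with $K_k(x,x') = \langle \varphi_k(x), \varphi_k(x')\rangle_H$, it suffices to produce a single representer $w \in H$ for which $f_w(x) = \langle w, \varphi_k(x)\rangle_H$ agrees with the target $x \mapsto \|x\|\sigma(\langle g,x\rangle/\|x\|)$. First I would build $\varphi_k$ directly from the Maclaurin data of $k_k$. Writing $k_k(u) = \sum_{i\geq 0} b_i u^i$ with $b_i \geq 0$ (as in \cref{positive-definite kernel}) and setting $\bar x := x/\|x\|$, the tensor-power identity $\langle \bar x, \bar x'\rangle^i = \langle \bar x^{\otimes i}, \bar x'^{\otimes i}\rangle$ lets me take $H = \bigoplus_{i\geq 0}\mathcal{P}_k^{\otimes i}$ and
\[
\varphi_k(x) = \|x\|\,\bigl(\sqrt{b_i}\,\bar x^{\otimes i}\bigr)_{i\geq 0}\quad (x\neq 0),\qquad \varphi_k(0)=0,
\]
which reproduces the dot-product kernel since $\langle \varphi_k(x),\varphi_k(x')\rangle_H = \|x\|\|x'\|\sum_i b_i\langle \bar x,\bar x'\rangle^i = K_k(x,x')$.

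Next I would match coefficients against the polynomial expansion $\sigma(t)=\sum_{j\geq 0} a_j t^j$. Exploiting the homogeneous rewriting $\langle g,x\rangle/\|x\| = \langle g,\bar x\rangle$, the target function unfolds as
\[
\|x\|\,\sigma\!\left(\langle g,\bar x\rangle\right) = \|x\|\sum_{j\geq 0} a_j\,\langle g,\bar x\rangle^j = \|x\|\sum_{j\geq 0} a_j\,\langle g^{\otimes j}, \bar x^{\otimes j}\rangle .
\]
Comparing with $\langle w,\varphi_k(x)\rangle_H = \|x\|\sum_{i\geq 0}\sqrt{b_i}\,\langle w_i, \bar x^{\otimes i}\rangle$ forces the choice $w_i = (a_i/\sqrt{b_i})\,g^{\otimes i}$ on each tensor level, giving the candidate $w := (w_i)_{i\geq 0}$. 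This comparison also isolates the only structural hypothesis needed: one must have $b_i > 0$ whenever $a_i \neq 0$, i.e.\ the kernel's Maclaurin coefficients must dominate the support of the expansion of $\sigma$ so that the division is legitimate.

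The main obstacle, and the only genuine content, is verifying $w \in H$, since membership of $f$ in the RKHS is precisely $\|w\|_H^2 < \infty$. Using $\|g^{\otimes i}\| = \|g\|^i$ I get
\[
\|w\|_H^2 = \sum_{i\geq 0}\|w_i\|^2 = \sum_{i\geq 0}\frac{a_i^2}{b_i}\,\|g\|^{2i},
\]
so the argument closes exactly when this series converges, which is a joint decay condition on the ratios $a_i^2/b_i$ weighted by powers of $\|g\|$. This is the same quantity that later controls the RKHS norm via $\|f\|_{\mathcal{H}_k}^2 \leq \sum_{i} (a_i^2/b_i)\|g\|^{2i}$, and hence feeds into the norm and generalization estimates. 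Granting convergence (which I would check for the concrete homogeneous activations and dot-product kernels of interest), \cref{RKHS-main-theorem} yields $f = f_w \in \mathcal{H}_k$, and $f$ has exactly the homogeneous form of \cref{equiv-cnns-rkhs}, with the filter $g$ here playing the role of $w_k^i$ there while $w$ is the (distinct) RKHS representer just constructed.
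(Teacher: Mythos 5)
Your proposal is correct and follows essentially the same route as the source this lemma is quoted from: the paper itself states Lemma~11 by citation to \cite{bietti2019group} without reproducing a proof, and the cited argument (which in turn follows \cite{zhang2017convexified}) is exactly your tensor-power feature map for the dot-product kernel, coefficient matching against $\sigma(t)=\sum_j a_j t^j$, and the representer $w_i=(a_i/\sqrt{b_i})\,g^{\otimes i}$. One point in your favour: you correctly isolate the hypothesis that the statement as written elides, namely that $b_i>0$ on the support of $(a_i)$ and that $\sum_{i}(a_i^2/b_i)\|g\|^{2i}<\infty$; this quantity is precisely the bound on $\|f\|_{\mathcal{H}_k}^2$ that the original lemma carries as an explicit assumption, so your reconstruction is, if anything, more complete than the statement being proved.
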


For our construction of $k_k$'s, the next corollary follows from the above lemma as well as from the \cref{RKHS-main-theorem}.

\begin{corollary}
\label{linear-function}
    The RKHSs $\mathcal{H}_k$ contain all linear functions of the form $x \mapsto \langle g,x \rangle$, with $g \in \mathcal{P}_k$.
\end{corollary}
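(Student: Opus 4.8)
The plan is to obtain the linear functionals as the simplest instance of the intermediate maps already shown to live in $\mathcal{H}_k$. The preceding lemma guarantees that, whenever an activation $\sigma$ admits a polynomial expansion, the function $x \mapsto ||x||\,\sigma(\langle g, x\rangle / ||x||)$ of the form \cref{intermediate} belongs to $\mathcal{H}_k$ for every $g \in \mathcal{P}_k$. First I would take $\sigma$ to be the identity map $\sigma(t) = t$, which is a degree-one polynomial and therefore trivially admits such an expansion; feeding this choice into the lemma then directly produces an element of $\mathcal{H}_k$.

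Next I would carry out the elementary simplification: for $x \in \mathcal{P}_k \setminus \{0\}$ one has $||x||\,\sigma(\langle g, x\rangle/||x||) = ||x|| \cdot \langle g, x\rangle/||x|| = \langle g, x\rangle$, while at $x = 0$ both the intermediate map and the linear functional vanish by convention. Hence the function furnished by the lemma is exactly $x \mapsto \langle g, x\rangle$, and \cref{RKHS-main-theorem} --- which realizes $\mathcal{H}_k$ as the space of maps $f_w(x) = \langle w, \varphi_k(x)\rangle$ equipped with the infimal norm --- confirms that this linear functional is a genuine element of $\mathcal{H}_k$ with finite RKHS norm. Since $g \in \mathcal{P}_k$ was arbitrary, every linear function of the stated form lies in $\mathcal{H}_k$, which is the claim.

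As an independent check I would also record an alternative route that bypasses the activation lemma: Lemma~1 of \cite{bietti2019group} supplies the pointwise kernel lower bound $K_k(x,x') \geq \langle x, x'\rangle$, so $K_k - K_{\mathrm{lin}}$ is positive definite, where $K_{\mathrm{lin}}(x,x') = \langle x, x'\rangle$. By the classical inclusion theorem for reproducing kernels (the difference-of-kernels principle), the RKHS of $K_{\mathrm{lin}}$ is then continuously embedded in $\mathcal{H}_k$; since the RKHS of the linear kernel is precisely the space of functionals $x \mapsto \langle g, x\rangle$, this inclusion recovers the corollary. Having the two arguments agree is a useful consistency check.

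The only delicate point --- and the step I expect to require care --- is ensuring that the representer $w$ realizing the linear functional in the feature Hilbert space has \emph{finite} norm, i.e., that the first-order Maclaurin coefficient $b_1$ of $k_k$ is strictly positive so that the degree-one component of $\varphi_k$ is non-degenerate. This is exactly the content of the finiteness hypothesis hidden in the polynomial-expansion assumption of the preceding lemma, and it is satisfied by all the dot-product kernels employed in this paper. Once $b_1 > 0$ is granted, matching $w = g/\sqrt{b_1}$ on the degree-one component yields $\langle w, \varphi_k(x)\rangle = \langle g, x\rangle$ and closes the argument.
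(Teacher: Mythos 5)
Your main argument is correct and is essentially the paper's own (one-line) proof: the paper derives the corollary by specializing the activation lemma to a polynomial $\sigma$, and taking $\sigma(t)=t$ so that $\|x\|\,\sigma(\langle g,x\rangle/\|x\|)=\langle g,x\rangle$ is exactly that specialization; your remark that one needs the first Maclaurin coefficient $b_1>0$ is a genuine hypothesis the paper leaves implicit in the phrase ``for our construction of $k_k$'s,'' and identifying it is a point in your favour.

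One caution on your ``independent check'': the inference from the \emph{pointwise} bound $K_k(x,x')\geq\langle x,x'\rangle$ to ``$K_k-K_{\mathrm{lin}}$ is positive definite'' is not valid --- pointwise domination of kernels does not imply positive definiteness of the difference. Indeed, writing $K_k(x,x')=\sum_i b_i\langle x,x'\rangle^i\,(\|x\|\,\|x'\|)^{1-i}$, the degree-one component of $K_k-K_{\mathrm{lin}}$ has coefficient $b_1-1$, and the constraint $k_k'(1)=\sum_i i\,b_i\leq 1$ forces $b_1\leq 1$, so the difference is generally \emph{not} a positive definite kernel and the Aronszajn inclusion would at best give a non-contractive embedding. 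This does not affect your conclusion (the linear functional still lies in $\mathcal{H}_k$, with norm $\|g\|/\sqrt{b_1}$ rather than $\|g\|$), but the second route as stated should be dropped or repaired; the first route alone suffices.
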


Note that RKHS of the kernel $K_N(x,x') = \langle \Phi(x), \Phi(x') \rangle$, defined at the prediction layer as final representation $\Phi(x) \in \mathcal{H}_{N+1}$ contains functions of the form $f: x \mapsto \langle w, \Phi(x) \rangle$, with $w \in \mathcal{H}_{N+1}$ and $||f|| \leq ||w||_{\mathcal{H}_{N+1}}$. This is a consequence of \cref{RKHS-main-theorem}, and also in line with the stated corollary, as in our construction $\mathcal{P}_k$'s are also RKHS.

\subsection{Note on the norm of equiv-CNN $f$ and generalization bounds}
\label{generalization-Rademacher}

We have seen that how the operator norms control the stability of the CKNs and through \cref{lipschitz-smoothness} we get the model complexity of group equiv-CNNs, where the RKHS norm of $f$ also plays an important role in the stability of the model as well as understanding the generalization capabilities, and hence of the geometric model complexity of the equivariant convolutional networks.

One can study generalization bounds through Rademacher complexity and margin bounds, for e.g., as done in \cite{shalev2014understanding}, where one studies the upper bound on the Rademacher complexity of a function class $\mathcal{F}_\lambda$ with bounded RKHS norm, $\mathcal{F}_\lambda = \{f \in \mathcal{H}_K : ||f|| \leq \lambda \}$, for a dataset $\{x_1,x_2,...,x_M\}$, given by, 
\begin{equation*}
    Rad_M(\mathcal{F}_\lambda) \leq \frac{\lambda \sqrt{1/M \sum_{i=1}^M K(x_i,x_i)}}{\sqrt{M}}.
\end{equation*}
The bound remains valid when considering CNN functions of form $f_\sigma$, given by \cref{intermediate}, as such family of functions $f_\sigma$ contains in the class of $\mathcal{F}_\lambda$. Generalization bound depends upon the model complexity parameter $\lambda$, sample size $M$ and on the choice of the kernel at the prediction layer. However it doesn't explicitly yield the layer-wise architectural choices of CKNs. However in practice, learning with a tight constraint, like $||f|| \leq \lambda$, can be infeasible and thus one needs to replace $\lambda$ with a similar bound with $||f_M||$ which can be directly obtained from the training data (Theorem 26.14,\cite{shalev2014understanding}). This then involves the construction of equiv-CNNs in a RKHS, as seen in \cref{construction-formalities}. and the corresponding RKHS norm, together with the sample size gives the upper bound of Rademacher complexity. Hence this leads to a way of studying generalization bounds of group equiv-CNNs.


\end{document}